\newtheorem{theorem}{Theorem}[section]
\newtheorem{lemma}[theorem]{Lemma}
\DeclareMathOperator*{\argmax}{arg\,max}
\begin{document}

\title{Fully Distributed Fog Load Balancing with Multi-Agent Reinforcement Learning}

\author{Maad Ebrahim and Abdelhakim Hafid,~\IEEEmembership{Member,~IEEE,}%
\thanks{M. Ebrahim and A. S. Hafid are with the NRL, Department of Computer Science and Operational Research, University of Montreal, Montreal, QC H3T-1J4, Canada (e-mail: maad.ebrahim@umontreal.ca; ahafid@iro.umontreal.ca).}%
\thanks{Corresponding author: Maad Ebrahim (maad.ebrahim@umontreal.ca).}%
}

\markboth{IEEE Transactions on Network and Service Management}%
{Ebrahim \MakeLowercase{\textit{et al.}}: Fully Distributed Fog Load Balancing with Multi-Agent Reinforcement Learning}


\maketitle

\begin{abstract}
Real-time Internet of Things (IoT) applications require real-time support to handle the ever-growing demand for computing resources to process IoT workloads. Fog Computing provides high availability of such resources closer to IoT devices in a distributed manner. However, these resources must be efficiently managed to distribute unpredictable IoT traffic demands among heterogeneous Fog resources. To fully leverage these distributed Fog resources, their management must be done in a fully distributed manner, i.e., without any source of centralization management. Hence, this paper proposes a fully distributed load-balancing solution with Multi-Agent Reinforcement Learning (MARL) that intelligently distributes IoT workloads to minimize waiting delay while providing fair resource utilization in the Fog network, enhancing end-to-end execution delay. By leveraging distributed decision-making without centralized or inter-agent coordination, MARL agents can provide faster and better load distribution decisions with minimal overhead compared to a single centralized agent solution. Besides performance gain, a fully distributed solution allows for scalable Fog load balancing, where agents manage subsets of a global Fog network in smaller collaboration regions. To ensure global optimality, the agents learn to optimize a common objective, i.e., aiming to reduce workload accumulation in every Fog node. Reducing state and action spaces for each agent to a few Fog nodes in proximity makes learning easier and convergence faster. Additionally, learning independent policies by avoiding coordination encourages the agents to leverage Fog resources in proximity before seeking remote resources. In this paper, we also evaluate the impact of a realistic frequency for observing the environment instead of the common assumption of having observations readily available for every action, highlighting the trade-off between realism and performance and the solution's applicability to real-world deployment constraints. 
\end{abstract}

\begin{IEEEkeywords}
Internet of Things, Fog Computing, Load Balancing, Reinforcement Learning, Multi-Agent Learning.
\end{IEEEkeywords}

\section{Introduction}
\label{sec:intro}

In the era of interconnected devices, Internet of Things (IoT) has emerged as a pervasive technology, facilitating seamless communication and data exchange among various devices. However, the surge in IoT devices has posed significant challenges in efficiently managing and processing the generated data. Fog computing has emerged as a promising paradigm to address these challenges by distributing computing resources closer to data sources. Unlike the Cloud, Fog nodes can serve time-sensitive and computationally intensive applications of resource-limited IoT devices. 

Fog Computing necessitates efficient resource management due to its distributed nature and latency-sensitive applications. Distributing IoT load efficiently between available Fog resources is essential to reduce end-to-end latency and enhance the system performance. High workload generation rates can increase the waiting delay due to the accumulation of tasks in the queues of Fog nodes. Hence, minimizing the waiting delay is the key to reducing the overall end-to-end execution delay of IoT workloads. 

Load balancing in Fog computing environments plays a crucial role in ensuring optimal resource utilization and timely processing of IoT data; this allows for minimizing task completion time and improving overall network performance. Traditional load-balancing techniques often struggle with the dynamic nature of Fog environments, characterized by limited heterogeneous resources and fluctuating workload demands. Reinforcement Learning (RL) can be used to learn optimal load distribution policies by interacting with such dynamic environments. To take full advantage of the distributed computing capabilities of Fog systems, load distribution decisions must be provided in a distributed manner closer to IoT devices at the network edge \cite{ebrahim2022blockchain}. 

Fully distributed Multi-Agent RL (MARL) overcomes the scalability limitations of centralized solutions by allowing autonomous agents to independently manage smaller sets of Fog nodes in proximity instead of managing a larger network by a single agent. These gents incorporate self-adaptation by leveraging the practical lifelong transfer learning framework proposed in our previous work \cite{ebrahim2023lifelong}, allowing them to adapt to dynamic environmental changes seamlessly. Each agent is deployed in an IoT Access Point (AP), i.e., Base Station, to learn its own load distribution strategy for its associated IoT devices considering only the candidate Fog nodes in its region. Managing the same set of candidate Fog nodes by multiple agents forms a collaboration region (see Fig. \ref{fig:regions}), where the goal is to optimize these shared resources independently by each agent. These collaboration regions can be isolated, but they can also be overlapped by sharing candidate Fog nodes between collaboration regions as in Fig. \ref{fig:regions}. 

\begin{figure}[!htbp]
\centering
\includegraphics[trim=130 10 120 15, clip, width=0.48\textwidth]{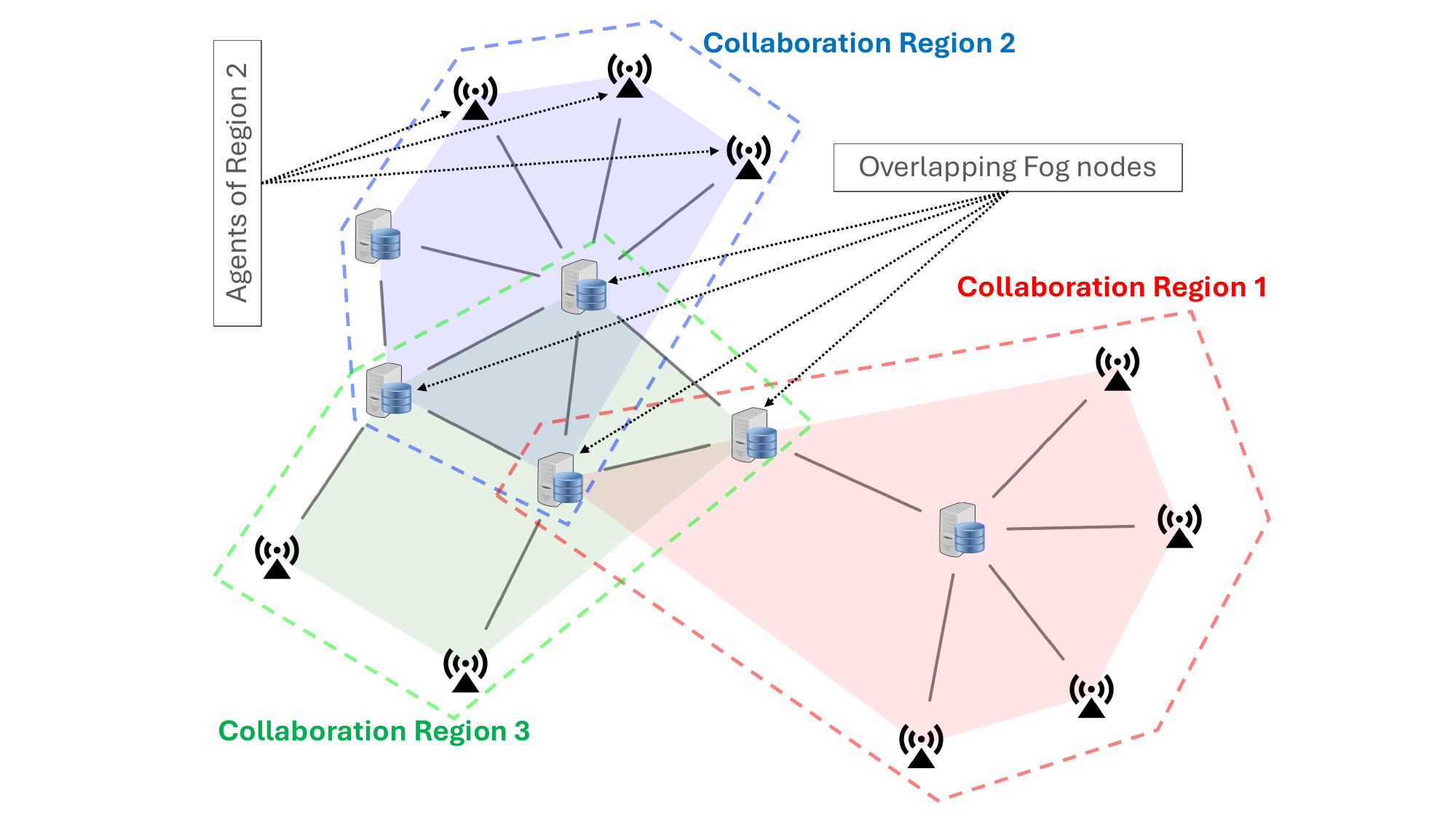}
\caption{A simplified Fog network with overlapping collaboration regions.}
\label{fig:regions}
\end{figure}

Considering small collaboration regions lowers the state and action spaces of individual agents, reducing the computational complexity of the problem and making the learning task easier for each agent. To simplify the problem further, we allow the agents to learn independent policies without collaboration, where they learn to take independent actions that fit their unique load requirements and the Fog resources closer to each agent. Learning separate load distribution strategies can be faster than learning a shared strategy for all agents, but requires a way to ensure global optimality while avoiding making the environment non-stationary for each agent. To do this in both isolated and overlapped regions, all agents learn to optimize a common objective through a common reward function, designed to reduce system-wide workload accumulation by learning to select less loaded Fog nodes.

To avoid learning a common load distribution strategy, agents must learn in a fully distributed manner without sharing any information with other agents in the system, either through a centralized coordination entity or inter-agent communication. This is because coordination eventually leads to learning a common policy, either by explicitly learning a shared policy with distributed execution of that policy or by implicitly learning a common policy through parameter and experience sharing. Parameter sharing is done by exchanging the learned parameters of the neural network (knowledge) between agents. Experience sharing can be done by exchanging the actions or the full interactions in the replay buffer. 

Besides leading to a common policy, coordination causes significant communication overhead, especially with inter-agent communication with high information-sharing frequency. Most MARL solutions in the literature consider centralized collaboration through one or more centralized coordinators, reproducing the scalability problem we are trying to solve in the first place. Moreover, the overhead of observing Fog information from across the network using multiple agents is never considered in the literature. Researchers commonly assume this information is readily available in real-time for every agent's action, which is impractical in network environments. Although this unrealistic assumption leads to theoretically optimal results, we evaluate in this work a more realistic way to observe Fog information using interval-based observations. We consider observation intervals of existing multi-casting protocols to evaluate a more realistic, although sub-optimal, performance of our solution. 

This paper presents a novel fully distributed, efficient, and self-adapting MARL load distribution solution for realistic global-scale Fog Computing implementations without centralized or inter-agent coordination. Through empirical evaluation, we show the effectiveness of our distributed MARL-based solution compared to the centralized RL solution proposed in our previous work \cite{ebrahim2023lifelong}. We also show the performance of observing the environment using realistic multi-cast intervals of 3 seconds compared to using unrealistic real-time observations. Hence, the key contributions of this paper can be summarized as follows:
\begin{itemize}
    \item Propose a scalable Fog load balancing solution for a global-scale implementation using fully distributed RL agents that blindly collaborate without coordination by learning to optimize a common objective through independent actions.
    \item Problem complexity is reduced by grouping the agents into smaller collaboration regions, with isolated or overlapped Fog nodes between the regions, to observe and manage smaller subsets of Fog nodes from the global network.
    \item These distributed agents reduce training time by incorporating a lifelong self-adaptation framework using transfer learning, allowing the agents to efficiently distribute dynamically changing IoT load across the Fog environment.
    \item Evaluate the realism-performance trade-off based on the frequency of observing Fog information from across the network, where we compare the common practice in the literature of unrealistically assuming Fog information is readily available in real-time for every action against the more realistic approach of collecting Fog information using existing interval-based multi-casting protocol, like Gossip.
\end{itemize}

The rest of the paper is organized as follows. Section \ref{sec:related} presents the related work. Section \ref{sec:system} presents the system model. Section \ref{sec:DRL} presents the proposed approach. Section \ref{sec:eval} presents the evaluation of our proposed approach. Finally, Section \ref{sec:conclusion} concludes the paper.

\section{Related Work}
\label{sec:related}
In recent years, researchers have made significant contributions toward developing efficient resource management solutions for Fog Computing environments. One such approach is centralized RL, where a single agent manages the entire network. However, centralization causes scalability limitations and single-point-of-failure vulnerabilities. Using a centralized load-balancing node in large-scale Fog networks increases the time needed to make the load assignment decision, increasing end-to-end execution delay. Mitigating the limitations of single-agent RL solutions requires distributed multi-agent solutions, where multiple agents collaborate to manage Fog resources across the network.

Jain {\it et al.\/}~\cite{QoSAware} proposed training an agent for each Fog node to mitigate the delay of decision-making in centralized solutions. However, they trained all the agents offline on a centralized cloud data center to later distribute the trained models to each Fog node, making offloading decisions locally with minimal delay. Centralized training provides the same decision-making policy for all agents despite their varying requirements, conditions, and nearby resource availability. Also, centralized training, and retraining in case of changes in the environment, require collecting local experience trajectories of each agent, i.e., observations, actions, rewards, and next observations. Transferring such information to a centralized entity to retrain these agents is inefficient compared to local retraining, especially for highly dynamic environments that keep changing over time. 

Determining the optimal timing for retraining the agents in a centralized entity presents a critical trade-off between decision accuracy and the significant retraining overhead. In addition, an RL decision can only be made when the observation of the current step is available, along with the reward of the previous step. Observing the state, for each decision, of every Fog node in large-scale Fog networks will cause a significant delay, especially with frequently generated workloads \cite{scaleproblem}. This delay is exponentially proportional to the number of agents in a distributed solution, making it impractical in real-world network deployment either with a centralized or a distributed solution. This delay is often not considered in the literature for Fog load-balancing solutions, assuming observations are readily available for every action at no cost. Without considering the time needed to transfer Fog load provisioning information to all RL agents, the effectiveness of these solutions in practice can not be guaranteed. 

Gao {\it et al.\/}~\cite{largeScale} used MARL to address the scalability problem of RL-based load-balancing solutions. They provide task offloading and resource allocation in large-scale heterogeneous Multi-access Edge Computing (MEC) systems. In their design, each edge server is regarded as an actor agent while still considering a centralized critic, where the critic is a neural network used to evaluate the actions of actor networks \cite{sutton2018reinforcement}. This centralized critic facilitates the coordination of offloading policies among those distributed actors, leading to a common policy. Besides not being fully distributed, the authors assumed that observations are readily available when the agents need to make assignment decisions, where the delay of exchanging Fog status information to every agent is ignored.

Baek {\it et al.\/}~\cite{FLoadNet} proposed a centralized critic network with distributed individual actor networks. Those distributed agents are located in APs and they cooperatively learn to solve the joint link and server load-balancing problem. A common policy for the distributed cooperative agents is optimized using parameter-sharing and distributed execution; the agents communicate among themselves while evaluating a common value function. The authors assume that updated policy parameters are shared between the agents by exchanging communication messages over limited bandwidth channels. However, they did not show the transmission delay associated with those communication messages, which augments when scaling up the solution for a larger-scale Fog environment. Such a delay can have a significant impact on the collaboration of the agents due to sharing outdated parameters.

Similarly, Goudarzi {\it et al.\/}~\cite{muDDRL} proposed a MARL framework to improve the execution delay of heterogeneous Fog services using multiple actors and a centralized critic. They deployed distributed agents (they call them brokers) that work in parallel to make service offloading decisions across different Fog nodes. A centralized service offloading model, i.e., the learner of the target policy, is trained using experience trajectories generated by the distributed actors. But again, they did not consider the delay caused by exchanging such information in a large-scale environment, especially when request rates are high. In addition, they did not consider the delay when updating the value function of each actor from the common centralized critic network.

Wei {\it et al.\/}~\cite{ManytoMany} proposed a scheme to improve vehicular Fog resource utilization in large-scale Vehicular Fog Computing (VFC) networks. They used MARL to decide on multi-tier task offloading in a distributed manner. Like most MARL-based solutions in the literature, they used the approach of centralized training with decentralized execution (CTDE). They share the agents' parameters with a joint critic network using the Attention machine learning mechanism, leading to a joint network that incorporates the learning process of distributed gated actors that exploit local computational potentials. Likewise, Suzuki {\it et al.\/}~\cite{CooperativeOffloading2} used CTDE to send learning samples from each agent to a centralized training entity, including task information, network link utilization, and Fog resources utilization. They unrealistically assumed all this information readily available in real-time for all agents when making offloading decisions.

Zhang {\it et al.\/}~\cite{onlineRequestScheduling} also used CTDE-based MARL to reduce the long-term average delay and energy consumption while improving the throughput of dynamic Edge networks using online Request Scheduling. Centralized training was adopted to improve the long-term system performance by realizing the implicit cooperation among edge nodes. This cooperation was obtained by transmitting the replay buffer data of each agent into a shared training server to train a common policy. During distributed execution, on the other hand, only local environment information is needed for each edge node to make the request scheduling decision. However, the authors did not discuss the scope of local information and the delay associated with transmitting such information over the network. In a large-scale network, such a delay is not tolerable for real-time decisions required by real-time applications. 

Lu {\it et al.\/}~\cite{DynamicScheduling} proposed MARL-based dynamic scheduling to handle the massive number of service requests by the ever-increasing number of vehicles in Internet-of-Vehicle (IoV) networks. They leveraged CTDE to learn high-quality scheduling rules for the service function scheduling problem. In their architecture, the generated scheduling actions of the distributed policies are evaluated by separate critic networks deployed in a centralized controller node. Similarly, Cui {\it et al.\/}~\cite{CooperativeOffloading3} proposed offline CTDE to cooperatively optimize the offloading strategy for computation-intensive and delay-sensitive IoV applications. Offline CTDE helped reduce the communication between the agents while increasing the stability of the environment. However, offline execution does not allow the agents to dynamically adapt to environment changes. In addition, their solution will not scale to larger networks as all their agents must observe the global state of the environment, which is assumed, unrealistically, readily available for every decision to be made. 

CTDE was also used by Gao {\it et al.\/}~\cite{adaptiveOffloading} to predict the queuing delay of edge servers by training a centralized critic in the Cloud. To evaluate each agent's policy, the critic network needs to collect observations and their corresponding actions from all agents in the system. Then, the agents independently make their decisions using their policies distributively. Mangiaracina {\it et al.\/}~\cite{dataAAS} proposed adaptive MARL to manage data placement among Fog nodes considering Quality of Service (QoS) requirements. Unlike CTDE, they enabled cooperation between multiple agents using the concept of a shared repository accessible by all agents. This repository contains knowledge bases and event logs shared between those agents to improve their decision-making. The authors claim that this centralized repository does not affect the overall performance as a bottleneck in the architecture. However, they did not consider the time each agent needs to access information from this repository.

To coordinate the decisions of independent agents, Yang {\it et al.\/}~\cite{CooperativeOffloading1} proposed that the decisions of each agent must depend on shared global state information. They proposed a centralized migration scheduling model to collect server load information and calculate the migration likelihood. They estimate the global information in a centralized model to reduce the communication overhead caused by broadcasting this information from each agent to the centralized critic network. They showed a 20\% gap in performance between their approach and the optimal solution but with only half the decision-making time, where the optimal solution is a centralized decision-making method having the actions of each agent and the actual state information of each Fog node (not estimated) available for all agents unrealistically in real-time.

Based on the literature discussed above, most existing MARL solutions adopt a centralized training approach and rely on unrealistic assumptions regarding collecting observations (see Table \ref{tab:compare}). The centralized component in these solutions raises several challenges, especially in large-scale environments, including computational complexity, exploration of larger state and action spaces, and the communication overhead due to exchanging training information with the centralized entity. A truly distributed solution remains preferable to enhance scalability and decision-making time, where agents can learn and make decisions independently without centralized or inter-agent coordination. Allowing the agents to learn independent policies based on the unique requirements of every agent can provide faster convergence compared to learning a common policy for all the agents.

\begin{table*}[tb]
\footnotesize
\caption{Comparison of state-of-the-art MARL-based Fog load balancing solutions. \label{tab:compare}}
\centering
\begin{tabular}{|l|c|c|c|c|c|c|}
\hline
\textbf{SOTA} & \textbf{Training} & \textbf{Retraining} & \textbf{Information Sharing} & \textbf{ Observation Availability} \\
\hline
Jain {\it et al.\/}~\cite{QoSAware} & Centralized & No & A shared trained model & Unrealistically instant \\
\hline
Gao {\it et al.\/}~\cite{largeScale} & Centralized & No & Parameter Sharing (Centralized) & Unrealistically instant \\
\hline
Baek {\it et al.\/}~\cite{FLoadNet} & Centralized & No & Parameter Sharing (Inter-agent) & Unrealistically instant \\
\hline
Goudarzi {\it et al.\/}~\cite{muDDRL} & Centralized & No & Experience Sharing (Centralized) & Unrealistically instant \\
\hline
Wei {\it et al.\/}~\cite{ManytoMany} & Centralized & No & Parameter Sharing (Centralized) & Unrealistically instant \\
\hline
Zhang {\it et al.\/}~\cite{onlineRequestScheduling} & Centralized & No & Experience Sharing (Centralized) & Unrealistically instant \\
\hline
Lu {\it et al.\/}~\cite{DynamicScheduling} & Centralized & No & Experience Sharing (Centralized) & Unrealistically instant \\
\hline
Cui {\it et al.\/}~\cite{CooperativeOffloading3} & Centralized & No & Experience Sharing (Centralized) & Unrealistically instant \\
\hline
Gao {\it et al.\/}~\cite{adaptiveOffloading} & Centralized & No & Experience Sharing (Centralized) & Unrealistically instant \\
\hline
Suzuki {\it et al.\/}~\cite{CooperativeOffloading2} & Centralized & No & Experience Sharing (Centralized) & Unrealistically instant \\
\hline
Mangiaracina {\it et al.\/}~\cite{dataAAS} & \textbf{Distributed} & No & Shared knowledge base & Unrealistically instant \\
\hline
Yang {\it et al.\/}~\cite{CooperativeOffloading1} & Centralized & No & A shared trained model & Unrealistically instant \\
\hline
\hline
\textbf{Our fully distributed approach} & \textbf{Distributed} & \textbf{Lifelong learning} & \textbf{None} & \textbf{Interval-based observations} \\
\hline
\end{tabular}
\end{table*}

To fill this gap in the literature, this paper proposes a fully distributed solution using independent agents that efficiently distribute IoT load in Fog networks without reliance on centralized components or inter-agent communication. Besides mitigating communication overhead, avoiding coordination between the agents leads to learning separate decision-making policies, which is easier than learning a common load distribution strategy. These independent agents learn to reduce workload accumulation in Fog nodes by receiving a reward for every load assignment decision representing the accumulated load in the selected Fog node. Optimizing this common objective encourages each agent to leverage nearby resources before seeking remote resources, mitigating the effect of the non-stationarity problem of independently trained agents. 

This reward function allows for a global-scale implementation when forming smaller groups of agents each managing a small subset of nearby Fog nodes. This grouping reduces state and action spaces for the agents, making learning easier and state observation overhead smaller. To reduce the overhead of observing Fog information further, we present the first attempt to evaluate collecting state observations and rewards using an interval-based Gossip multi-casting protocol. This approach is more realistic than assuming such information is readily available immediately before every decision. This evaluation explores the trade-off between optimality and applicability of the solution in real-world networks, leading to a deployment-ready solution. 

\section{The Fog System Model}
\label{sec:system}
This section defines the system model of the Fog network and the formulation of the load balancing problem. Table \ref{tab:symbols} shows the definitions of the mathematical symbols used throughout this paper. We reproduce the realistic Fog environments used in our previous works \cite{EBRAHIM2023privacy, EBRAHIM2023resilience} to compare the proposed fully distributed (multi-agent) solution with the centralized (single agent) solution in \cite{EBRAHIM2023privacy}.

\begin{table}[tb]
\footnotesize
\caption{List of mathematical notations. \label{tab:symbols}}
\centering
\begin{tabular}{|l|l|}
\hline
\textbf{Symbol} & \textbf{Definition}\\
\hline
$N$ & Set of AP, Fog, and Cloud nodes ($N^{AP}, N^F, N^C$)\\
\hline
$N^{F_C}$ & Candidate Fog nodes for the agents in a given region \\
\hline
$IPT_n$ & Compute capabilities of node $n$ (Instructions/timestep) \\
\hline
$RAM_n$ & Memory capabilities of node $n$ \\
\hline
$L$ & Set of Network links connecting the nodes \\
\hline
$BW_l$ & Bandwidth of link $l$ \\
\hline
$PR_l$ & Propagation delay of link $l$ \\
\hline
$DA$ & Set of distributed applications \\
\hline
$M_{da}$ & Set of modules of application $da$ \\
\hline
$D_{da}$ & Set of dependencies between the modules of $da$ \\
\hline
$w$ & A category from the set of workload categories $W$ \\
\hline
$\beta$ & The scale parameter of the exponential distribution \\
\hline
$\mathbb{Q}_n$ & The number of queued jobs in node $n$ \\
\hline
$s_t$ & State at step $t$, from a set of possible states $S$ \\
\hline
$a_t$ & Action at step $t$, from a set of possible actions $A$ \\
\hline
$r_t$ & The agent's immediate reward at step $t$ \\
\hline
$G$ & The discounted sum of immediate rewards (return) \\
\hline
$\gamma$ & The discount factor of the return \\
\hline
$\mathcal{F}$ & Set of candidate Fog nodes in a collaboration region $\mathcal{R}$ \\
\hline
$\mathcal{F}_S$ & Shared Fog nodes between two collaboration regions \\
\hline
$\pi(s, a)$ & The agent policy for taking action $a$ in state $s$ \\
\hline
$Q_\pi(s, a)$ & The value for using $\pi$ after taking action $a$ in state $s$ \\
\hline
$Q^*$ & The optimal $Q$-Value for using the optimal policy $\pi^*$ \\
\hline
$\phi(s, a)$ & A mapping of the state-action pairs \\
\hline
$\theta$ & The neural network weights for the state-action pairs \\
\hline
$\alpha$ & The learning rate for updating the $Q$-Value \\
\hline
$\max_aQ$ & The maximum $Q$-Value in state $s$ for all actions \\
\hline
$\argmax_aQ$ & The action that maximizes the $Q$-Value in state $s$ \\
\hline
$E_\mathcal{T}$ & The new environment (target task $\mathcal{T}$) \\
\hline
$\mathcal{I}_\mathcal{S}$ & Information from the source task $\mathcal{S}$ (old environment) \\
\hline
$\mathcal{B}$ & Experiences stored in the agent's replay buffer \\
\hline
\end{tabular}
\end{table}

The Fog environment has a mesh-like topology, with a central cloud, interconnected Fog nodes, and APs that work as gateways for their associated IoT devices. Load-balancing agents are deployed in these APs, where each agent makes its own decisions to distribute the load of its associated IoT devices between the Fog nodes in its collaboration region, i.e., its candidate Fog nodes. A group of agents in each collaboration region manages a set of candidate Fog nodes, requiring the agents to jointly optimize the resources of all candidate Fog nodes in that region. To be fully distributed, the agents must collaborate blindly without exchanging any kind of coordination information, like experiences, actions, or policies parameters.

In our simulated environments, the Fog network is defined by the set of $N = n_1, n_2, \cdots, n_z$ nodes described by their compute $IPT_n$ and memory $RAM_n$ resources \cite{EBRAHIM2023privacy, EBRAHIM2023resilience}. The Nodes, including $N^{AP}$ AP nodes, $N^F$ Fog nodes, and $N^C$ Cloud nodes, are connected through $L = l_1, l_2, \cdots, l_z$ bidirectional links. Each link is characterized by the pair of nodes ($n_i$, $n_j$) that it connects, its bandwidth $BW_l$, and its propagation delay $PR_l$. A set of distributed applications $DA = da_1, da_2, \cdots, da_z$ simultaneously run in the system; each application has a set of modules $M_{da}$ and dependency messages $D_{da}$ between these modules, i.e., workloads or requests.

Nodes and network links have heterogeneous resources and must serve different workload categories $|W|$ each with different compute requirements, i.e., light, medium, and heavy workloads. Workloads are generated as a Poisson Point Process using exponential distributions with three different scale parameters $\beta$ to simulate high, medium, and low workload generation rates. This allows for a dynamic representation of job arrivals in our Fog environment, where small $\beta$ values increase the probability of having smaller time intervals between subsequent workloads, leading to higher generation rates. Nodes and links are modeled with M/M/1 queuing models to handle the generated workloads, giving the ability to provide insights into performance metrics such as queue lengths and waiting times. To minimize the waiting time before being served by Fog nodes, the agents must learn to reduce the number of queued jobs $\mathbb{Q}_n$ in the queue of each Fog node $n$. Reducing workload accumulation in each Fog node reduces Fog overload probability and the end-to-end execution delay of IoT application loops \cite{EBRAHIM2023privacy}.

To optimize this objective by independent agents, the agents need to learn independent policies that can balance the generated load between Fog nodes. To do that without exchanging coordination information, the agents observe the number of jobs queued in each Fog node in their region (as environment state $s$) to select the Fog node with the smallest load (as action $a$). Each agent takes its actions, from the set of possible actions $A$ corresponding to the set of candidate Fog nodes $N^F$ in its region, independently from all other agents. To reduce waiting time collaboratively by all agents, they learn to minimize the common objective of reducing workload accumulation in all Fog nodes. Therefore, the reward $r$ that evaluates their most recent action is defined by the load in the most recently assigned Fog node by that action, encouraging the agents to select the least loaded Fog nodes collaboratively. This rewarding mechanism allows the agents to jointly reduce workload accumulation in the Fog nodes of their region without sharing any collaboration information.

\section{Distributed RL Load Balancing}
\label{sec:DRL}

A fully independent Double Deep Q-Learning (DDQL) agent is deployed in each of the $N^{AP}$ APs of the Fog network. The agents in a given collaboration region are designed to manage the set of candidate Fog nodes $N^{F_C}$, i.e., a subset of Fog nodes from the larger network, in that region only. Hence, the size of state and action spaces can vary in different regions based on the number of candidate Fog nodes in each region, giving flexibility in forming those regions based on the needs of Fog service providers. This limits state and action spaces, and immediate rewards, to be collected from a smaller number of nearby Fog nodes, reducing the overhead of observing Fog queue information before every workload assignment decision. Hence, the state, action, and reward function of every agent in a given region, at every decision step $t$, are defined as follows:

\begin{itemize}
    \item \textbf{State:} $s_t = (w_t, \mathbb{Q}_{n, t} : \forall n \in N^{F_C})$, where $w$ is the number of compute instructions required by the workload to be assigned (representing its category) and $\mathbb{Q}_n$ is the number of queued jobs in each candidate Fog node $n$.
    \item \textbf{Action:} $a_t \in A$ is the decision to assign the current workload to a candidate Fog node from the set of candidate Fog nodes in the region of that agent.
    \item \textbf{Reward:} $r_{t} = - \mathbb{Q}_{a_{t-1}, t}$ is the number of queued jobs in the Fog node previously selected by that agent, i.e, the reward at decision step $t$ is the performance measure of the action taken at decision step $t-1$. Because the value of the reward is part of the observed state, it does not need to be transmitted separately to each agent. This value is then negated to represent the reward as a cost to be minimized instead of a utility to be maximized. The accumulation of these rewards is the expected discounted return $G = \sum_{t=0}^{\infty} \gamma^t r_t$, where $\gamma$ defines the importance of future rewards compared to recent ones. Giving importance to future rewards encourages the agents to learn a load distribution strategy that guarantees minimizing load accumulation in all Fog nodes over the long term. 
\end{itemize}

Our reward function enables the distribute agents to independently learn different load distribution strategies that keep the load in all Fog nodes minimal at all times. Each agent can discover its best strategy by considering its nearby Fog nodes before selecting remote Fog nodes in its region; this can be independently learned faster than learning a common policy for all agents. Hence, these independent agents can converge faster than single-agent solutions, as well as multi-agent solutions that lead to a common policy, i.e., explicitly and implicitly learn a common policy with CTDE and parameter/experience sharing, respectively. Also, learning independent policies allows us to reduce the state and action spaces by managing smaller subsets of Fog nodes, forming a collaboration region for each group of agents. Moreover, distributed solutions can provide near real-time decisions for every generated workload by avoiding the three-way communication overhead of centralized solutions (see Fig. \ref{fig:Topology_WF}\subref{fig:Topology_WF_C}). 

\begin{figure}[!htbp]
\centering
\subfloat[Centralized agent]{\includegraphics[width=0.48\textwidth]{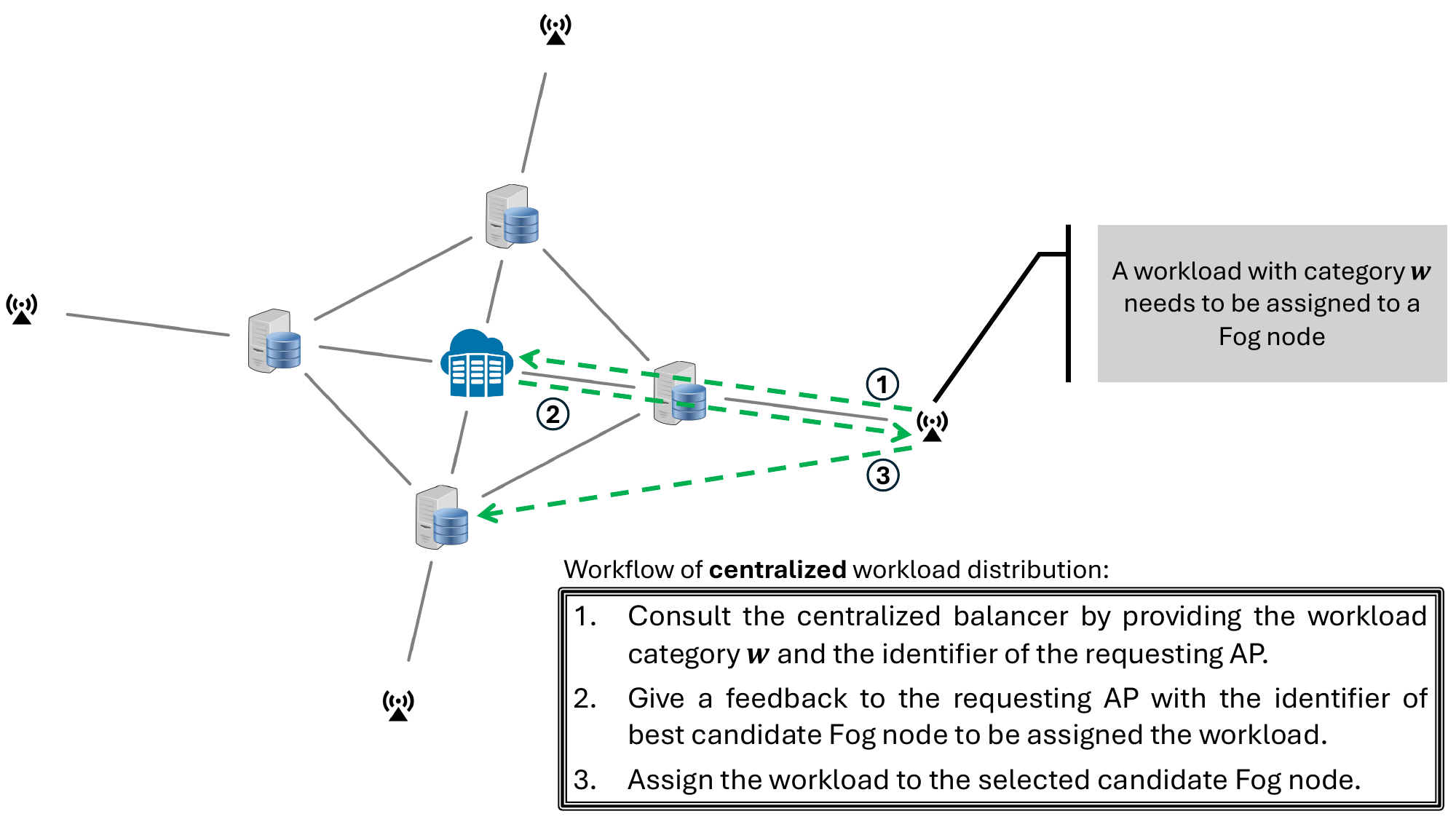}
\label{fig:Topology_WF_C}}
\hfil
\subfloat[Distributed agents]{\includegraphics[width=0.48\textwidth]{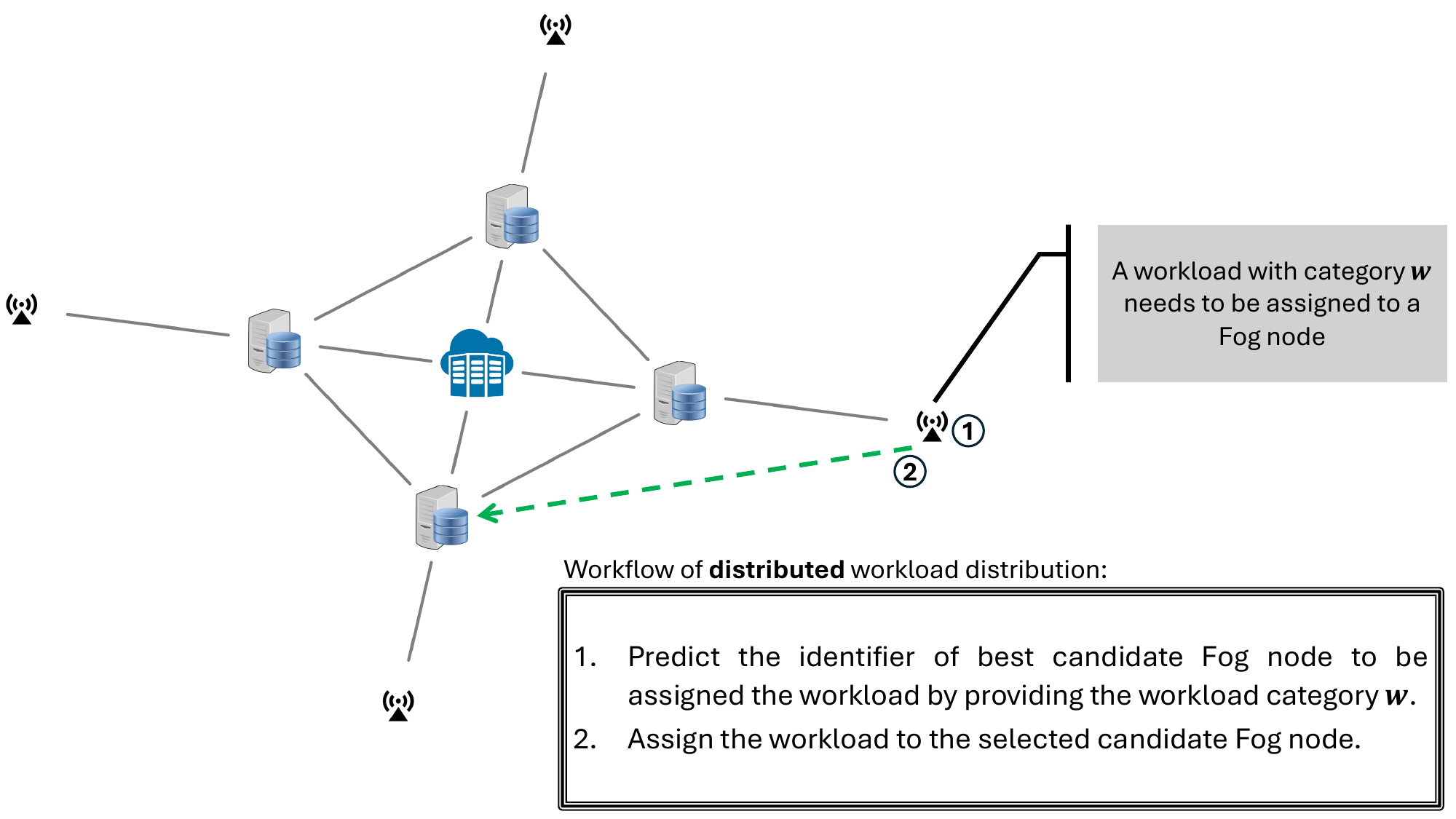}
\label{fig:Topology_WF_D}}
\caption{Workflow of workload distribution.}
\label{fig:Topology_WF}
\end{figure}

In centralized solutions, a request for every generated workload is sent to a distanced agent with the category of the workload and the identifier of its AP. The agent notifies the AP of the suitable Fog node to serve that workload based on workload category, AP identifier, and the load in each Fog node in the network. Only then, the AP can send the workload to the selected Fog node. With distributed solutions, however, agents make their own assignment decisions locally (see Fig. \ref{fig:Topology_WF}\subref{fig:Topology_WF_D}). The workload is then immediately sent to the selected Fog node, minimizing the decision delay compared to a centralized solution.

The time needed to observe Fog load information and the associated overhead of this process is often neglected in the literature \cite{realworld}. With a single-agent solution, all Fog nodes send their load information to that agent for every generated workload, as in Fig. \ref{fig:Topology_info}\subref{fig:Topology_info_C}. Although our distributed agents are grouped to manage smaller subsets of Fog nodes in proximity, this overhead is still significant since every agent needs to collect this information from every candidate Fog node in their region (see Fig. \ref{fig:Topology_info}\subref{fig:Topology_info_D}). This makes MARL observation communication overhead, for every decision, exponentially proportional to the number of Fog nodes and agents in each region.

\begin{figure}[!htbp]
\centering
\subfloat[Centralized agent]{\includegraphics[trim=50 30 50 40, clip, width=0.48\textwidth]{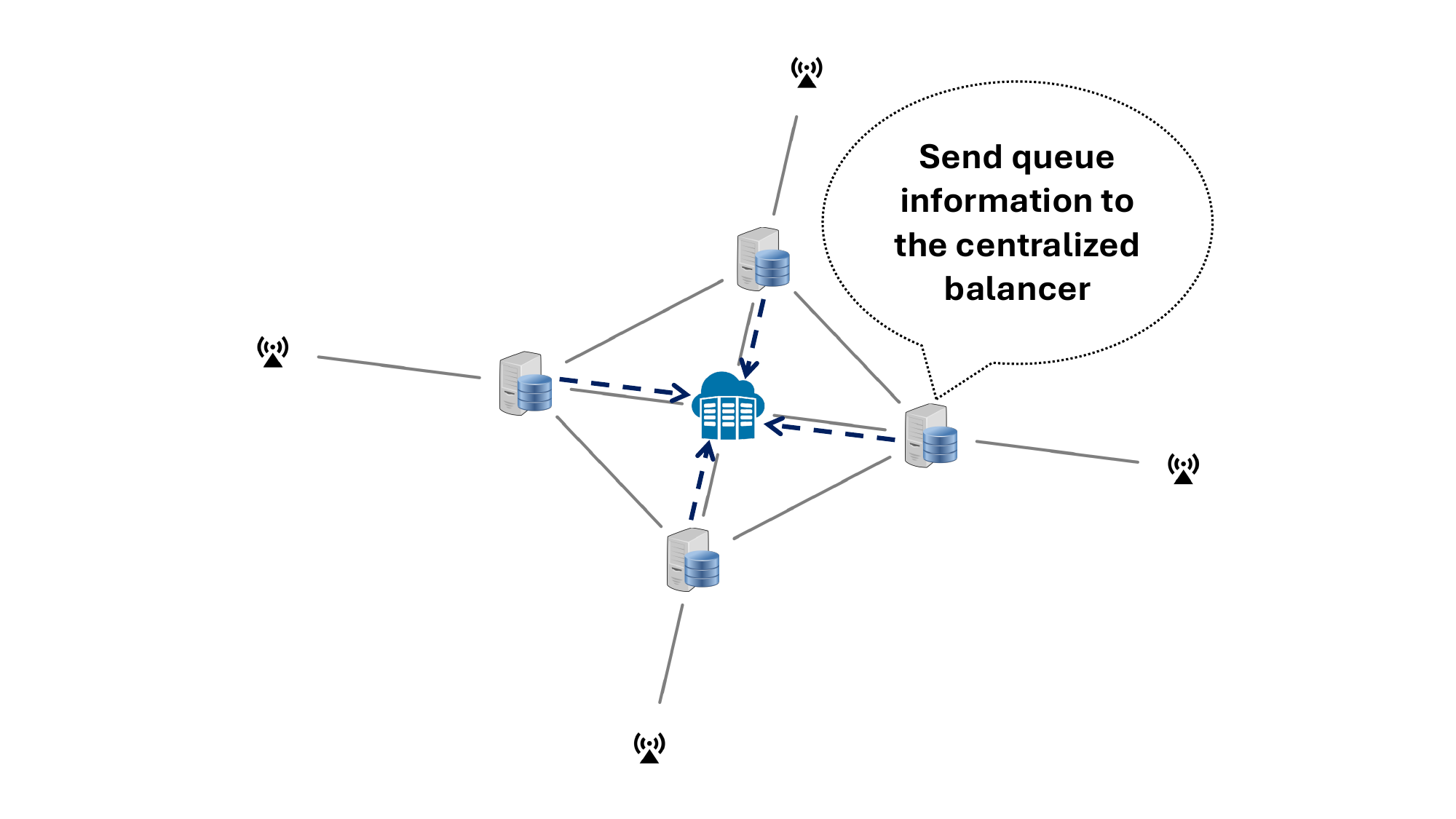}
\label{fig:Topology_info_C}}
\hfil
\subfloat[Distributed agents]{\includegraphics[trim=50 30 50 40, clip, width=0.48\textwidth]{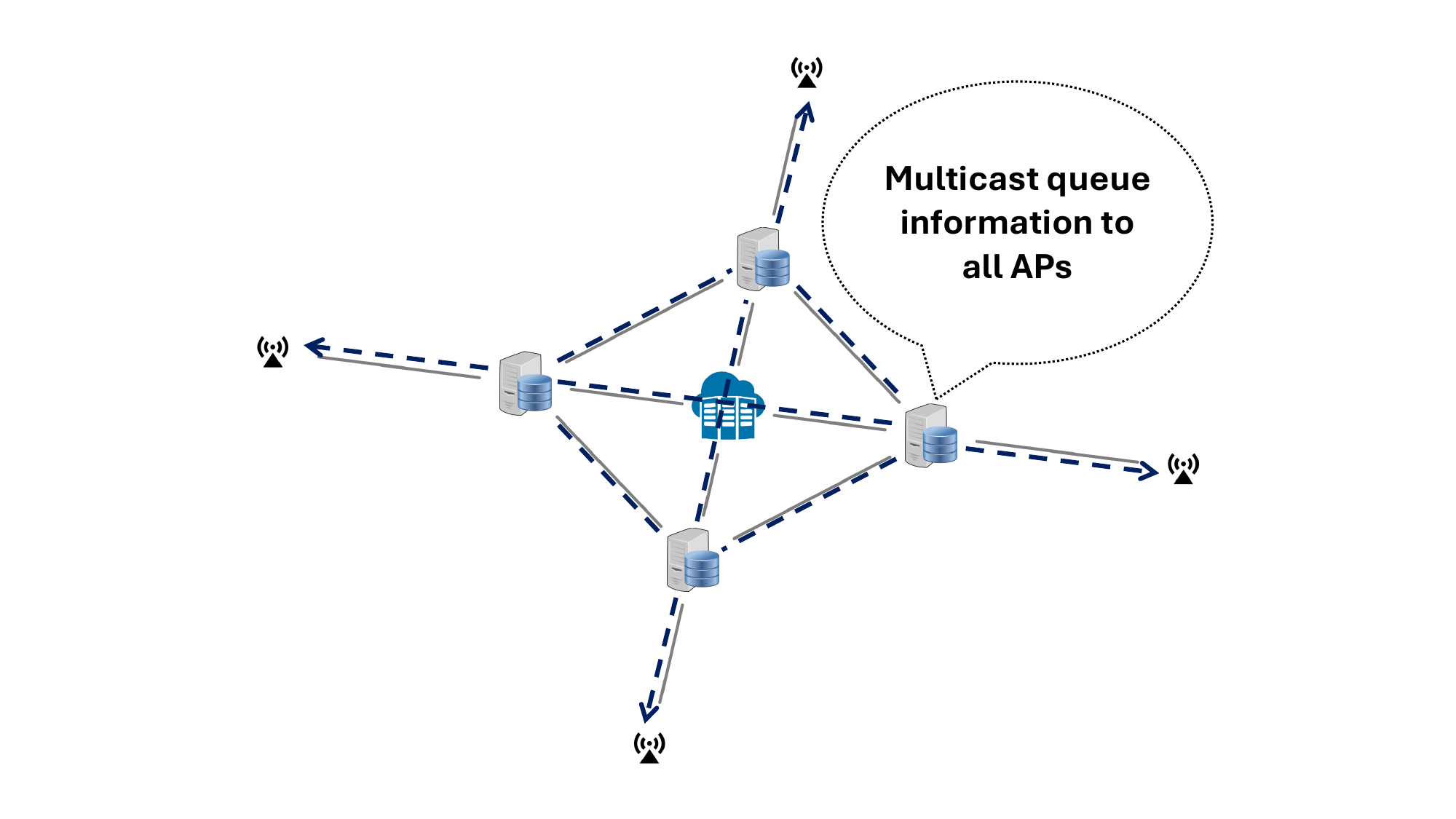}
\label{fig:Topology_info_D}}
\caption{Transmission of Fog queue information.}
\label{fig:Topology_info}
\end{figure}

Besides consuming bandwidth, observations can not be readily available before every decision due to communication delays, especially in large-scale networks with high generation rates. A more practical approach for deployment-ready solutions is to multi-cast such information at predefined intervals. This might lead to sub-optimal decisions due to using the same observation to make multiple assignment decisions, creating a non-stationary environment for the independent agents due to these outdated observations. This problem does not exist with centralized and coordination-based RL because the new environment state depends on a common policy behavior, not on the joint (unobserved) actions of independent agents. Being unrealistic to observe the real-time load on each Fog node before every decision, makes it more practical to consider a single observation for several actions within a given interval. Increasing observation frequency, using the smallest interval of existing multi-casting protocols (3 seconds for Gossip \cite{gossip}), reduces the impact of non-stationarity. 

In addition, dividing large-scale networks into smaller subsets of collaboration regions limits the transfer of Fog load information to fewer agents within a smaller geographical scope, thereby reducing the communication overhead. These regions can be designed based on the requirements of Fog service providers with the possibility of creating overlapped regions by sharing Fog resources between them. Managing fewer Fog nodes in proximity reduces state and action spaces for each agent and avoids the need to manage infeasible remote Fog nodes in large-scale networks. To ensure optimal load distribution in overlapped regions, the agents must be able to blindly learn independent load distribution strategies without knowing the policies and actions of the other agents. By independently learning to select the least loaded Fog node, our agents minimize the load on all Fog nodes in their regions, including shared Fog nodes.

\begin{lemma}
Given two non-overlapping sets of candidate Fog nodes $\mathcal{F}_1 \cap \mathcal{F}_2 = \emptyset$ in two collaboration regions (see Fig. \ref{fig:overlapping}). The resources in each region are optimized by a group of agents that blindly collaborate without knowing the actions of other agents in that region. If both regions share a subset of candidate Fog nodes $\mathcal{F}_1 \cap \mathcal{F}_2 = \mathcal{F}_S$, the agents of both regions can still blindly collaborate to optimize local and shared Fog resources.
\end{lemma}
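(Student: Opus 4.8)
The plan is to reduce the multi-region claim to the single-region result that is assumed to hold by construction, using the structure of the reward function $r_t = -\mathbb{Q}_{a_{t-1}, t}$. First I would set up the single-region baseline precisely: in a region $\mathcal{R}$ with candidate set $\mathcal{F}$, every agent observes $s_t = (w_t, \mathbb{Q}_{n,t} : n \in \mathcal{F})$ and receives a reward that depends \emph{only} on the queue length of the node it last selected. The key observation is that this reward is \emph{node-local}: it does not reference which agent caused the load, nor the identities or policies of the other agents — it is simply a function of the observable state of a single Fog node. Consequently, from the perspective of any individual agent, the other agents' actions enter only through their aggregate effect on the queue lengths $\mathbb{Q}_{n,t}$, which are part of the observed state. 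This is exactly the ``blind collaboration'' mechanism: each agent greedily steering load toward least-loaded nodes is, by the definition of the common objective $G = \sum_t \gamma^t r_t$, already optimizing the system-wide accumulation in $\mathcal{F}$.

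Next I would handle the shared subset $\mathcal{F}_S = \mathcal{F}_1 \cap \mathcal{F}_2$. The argument is that a shared Fog node $n \in \mathcal{F}_S$ is, from the standpoint of an agent in region $\mathcal{R}_1$, indistinguishable from a node whose queue happens to be influenced by additional (unobserved) external traffic — which is precisely the situation every agent already tolerates within its own region, since it does not observe its peers' actions either. Formally, I would argue that the contribution of region-$2$ agents to $\mathbb{Q}_{n,t}$ for $n \in \mathcal{F}_S$ is absorbed into the state $s_t$ that region-$1$ agents observe, so the reward signal each region-$1$ agent receives for selecting $n$ remains a faithful, if noisier, measure of that node's congestion. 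Since the reward for nodes in $\mathcal{F}_1 \setminus \mathcal{F}_S$ is entirely unaffected by region $2$, and the reward for nodes in $\mathcal{F}_S$ is still the true queue length (just driven by a larger pool of decision-makers), the common objective for region-$1$ agents still coincides with minimizing load accumulation across $\mathcal{F}_1$. By symmetry the same holds for region $2$, and the two objectives are mutually consistent on $\mathcal{F}_S$ because both groups are pushing the shared queues \emph{down} — there is no conflict of direction. Hence the union of all agents is collectively minimizing load accumulation across $\mathcal{F}_1 \cup \mathcal{F}_2$, which establishes the claim.

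The main obstacle is the non-stationarity concern: overlapping the regions enlarges the set of agents whose unobserved actions perturb the shared queues, which in principle widens the gap between the ``blind'' equilibrium and the fully-coordinated optimum. I would address this by noting that the same non-stationarity is already present in the single-region case (agents do not observe each other there either), so overlap introduces a difference of \emph{degree}, not of \emph{kind}; the reward's dependence on directly observed queue state — rather than on latent joint actions — is what keeps each agent's local objective aligned with the global one regardless of how many agents share a node. I would also invoke the mitigating effect already described in the text: giving weight to future rewards via $\gamma$ steers agents toward strategies that keep \emph{all} reachable nodes lightly loaded, and the ``nearby-before-remote'' bias induced by the common reward naturally partitions load so that shared nodes absorb only spillover. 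I expect the subtlety in writing this cleanly will be stating the alignment-of-objectives step with enough rigor to be convincing without formally introducing a Markov-game model — I would keep it at the level of showing that adding region-$2$ agents changes only the \emph{distribution} of $\mathbb{Q}_{n,t}$ on $\mathcal{F}_S$ and never the functional form of any region-$1$ agent's reward, so the single-region guarantee transfers verbatim.
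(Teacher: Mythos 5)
Your argument is sound in spirit and actually more substantive than the paper's own proof, but it takes a genuinely different route. The paper proves the lemma by contradiction in a few lines: it assumes that agents in an isolated region can blindly optimize their resources, supposes that some agent in an overlapped region fails to collaborate (causing inefficient utilization of its region's resources, including $\mathcal{F}_S$), observes that this constitutes a breakdown of the collaborative optimization process, and concludes that this contradicts the single-region assumption — hence the agents must collaborate. That argument is short but essentially restates the hypothesis; it never explains \emph{why} the single-region guarantee should survive the introduction of unobserved external decision-makers acting on $\mathcal{F}_S$. Your proof supplies exactly that missing mechanism: because the reward $r_t = -\mathbb{Q}_{a_{t-1},t}$ is node-local and the other agents' actions enter each agent's problem only through their aggregate effect on the observed queue lengths, a shared node loaded by region-$2$ agents is indistinguishable from a node loaded by unobserved peers within the agent's own region — a situation the single-region assumption already covers. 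This reduction-by-indistinguishability is a constructive justification rather than a contradiction argument, and it also lets you honestly flag the one place where the analogy is imperfect (overlap increases the \emph{degree} of non-stationarity even if not its kind), something the paper's proof glosses over entirely. If anything, your version would strengthen the paper; the only caution is that, like the paper, you stop short of a formal Markov-game statement, so ``optimize'' remains an informal notion in both treatments.
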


\begin{proof}
Assuming the agents in an isolated region can blindly optimize their isolated Fog resources without knowing the actions of the other agents in their region, nor the mechanism of making those actions. Now, assume that a region with $\mathcal{F}_1$ candidate Fog nodes share $\mathcal{F}_S$ candidate Fog nodes with a region having $\mathcal{F}_2$ candidate Fog nodes (see Fig. \ref{fig:overlapping}). Suppose an agent in one of the regions does not collaborate with the other agents, leading to inefficient utilization of Fog resources in its region, including $\mathcal{F}_S$ resources. This implies that there is a breakdown in the collaborative optimization process. However, this contradicts our initial assumption that within each region, the agents can blindly optimize the resources of their candidate Fog nodes. Therefore, we conclude that the agents of both regions will collaborate to optimize both local and shared resources. Thus, the lemma holds.
\end{proof}

\begin{figure}[!htbp]
\centering
\includegraphics[trim=0 130 0 10, clip, width=.48\textwidth]{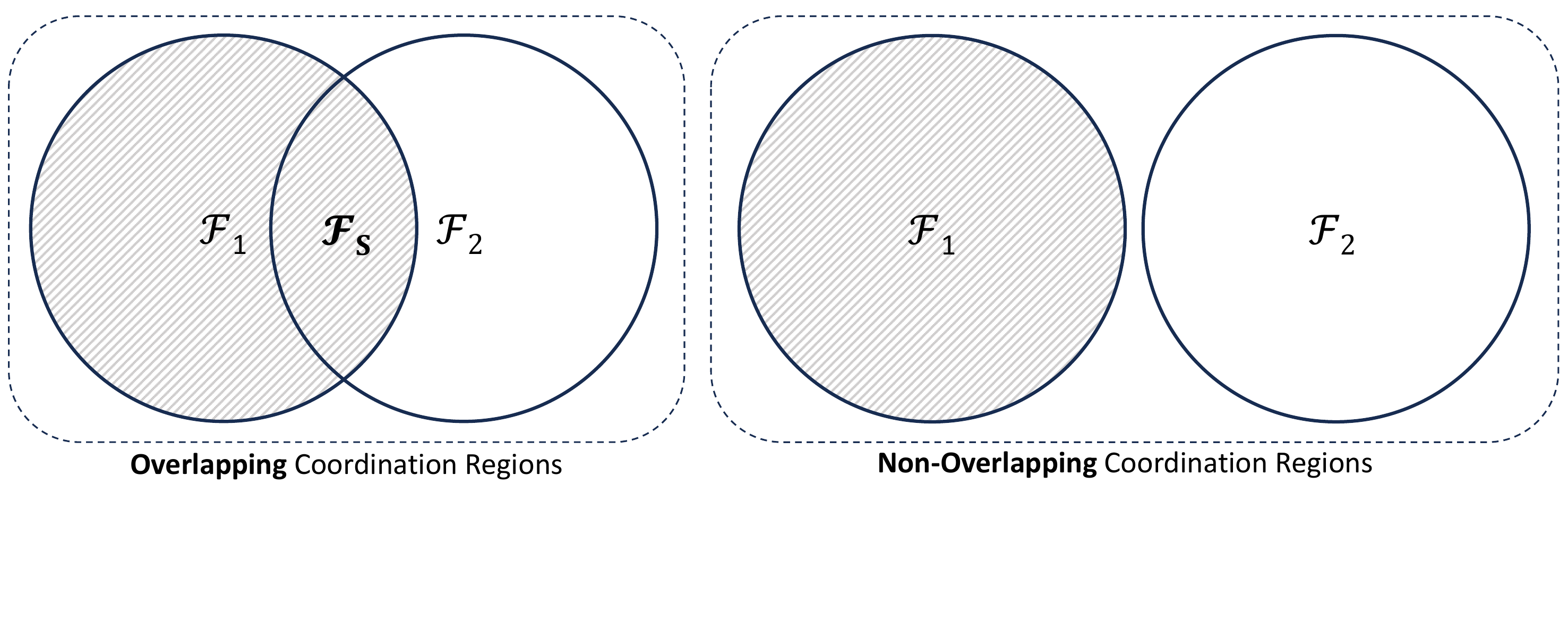}
\caption{Overlapping vs. non-overlapping sets of candidate Fog nodes for two coordinating regions.}
\label{fig:overlapping}
\end{figure}

This paper provides empirical proof supporting the ability of our agents to blindly learn independent load distribution strategies that will jointly optimize the Fog resources in their isolated region without any coordination between these agents. Validating the efficiency of our fully distributed solution under a realistic setup of one isolated region proves the solution's applicability in a global-scale multi-region network with isolated and overlapped collaboration regions. A fully distributed solution requires each agent to independently learn an efficient load distribution strategy, i.e., policy $\pi$, based on the category of the workload to be assigned and the current load in each Fog node in the region. Learning independent policies to optimize a common objective, i.e., selecting the least loaded Fog node, blindly encourages each agent to choose nearby Fog nodes before considering distanced ones. 

To learn the optimal policy $\pi^*$, the agents evaluate taking action $a$ in state $s$ following policy $\pi$ using the Q-function $Q_\pi(s, a) = \mathbb{E}\left[G \mid s, a, \pi \right]$. Therefore, the optimal Q-function $Q^*(s) = \max_\pi Q_\pi(s)$ cannot be less than the Q-function of any other policy, i.e., $Q_{\pi^*}(s) \geq Q_\pi(s)$. Deep Q-Learning (DQL) approximates the Q-function using a mapping $\phi$ of state-action pairs, allowing it to work in high-dimensional and continuous state spaces. This mapping is weighted by $\theta$ to calculate the Q-function $Q(s, a) = \theta \phi (s, a)$, which is updated according to Bellman equation: $Q(s, a) \leftarrow Q(s, a) + \alpha r + \gamma \max_{a'}Q(s', a') - \enspace Q(s, a)$.

The best action in a given state is the one that maximizes the Q-function for that state, i.e., $\pi(a\mid s) = \argmax_aQ(s, a)$. Once the algorithm converges to $Q^*$, the optimal action is predicted using the optimal policy $\pi^*(s) = \argmax_aQ^*(s, a) ~ \forall s \in S$. To avoid instability and divergence during training, a random set of interactions is batched from a Replay Buffer to train the Q-function network \cite{mnih2013playing}. In addition, an off-policy Double-DQL \cite{ddqn} balances exploration and exploitation by learning an optimal policy while using an exploratory policy for action selection. It uses two networks for the Q-function, one updates every training step to evaluate the actions, and the other updates periodically for stable action selection.

To mitigate the computational cost of continuous training in a practical deployment, each agent adopts our lifelong learning framework previously proposed in \cite{ebrahim2023lifelong} (see Fig. \ref{fig:lifelong}). This framework can be used to initialize new agents in a simulation, or by leveraging the expertise of existing agents in the network, before fine-tuning them in their intended deployment environment, mitigating the impact of bad actions when training from scratch in real networks. With this framework, the agents can stop training when they converge to the optimal policy (performance saturation) and only retrain in case of significant performance degradation due to environmental changes. When convergence is detected, i.e., when the average reward does not improve further, the agents extract lightweight inference models to provide inference using only simple arithmetic and array operations.

\begin{figure*}[!htbp]
\centering
\includegraphics[trim=0 90 0 50, clip, width=\textwidth]{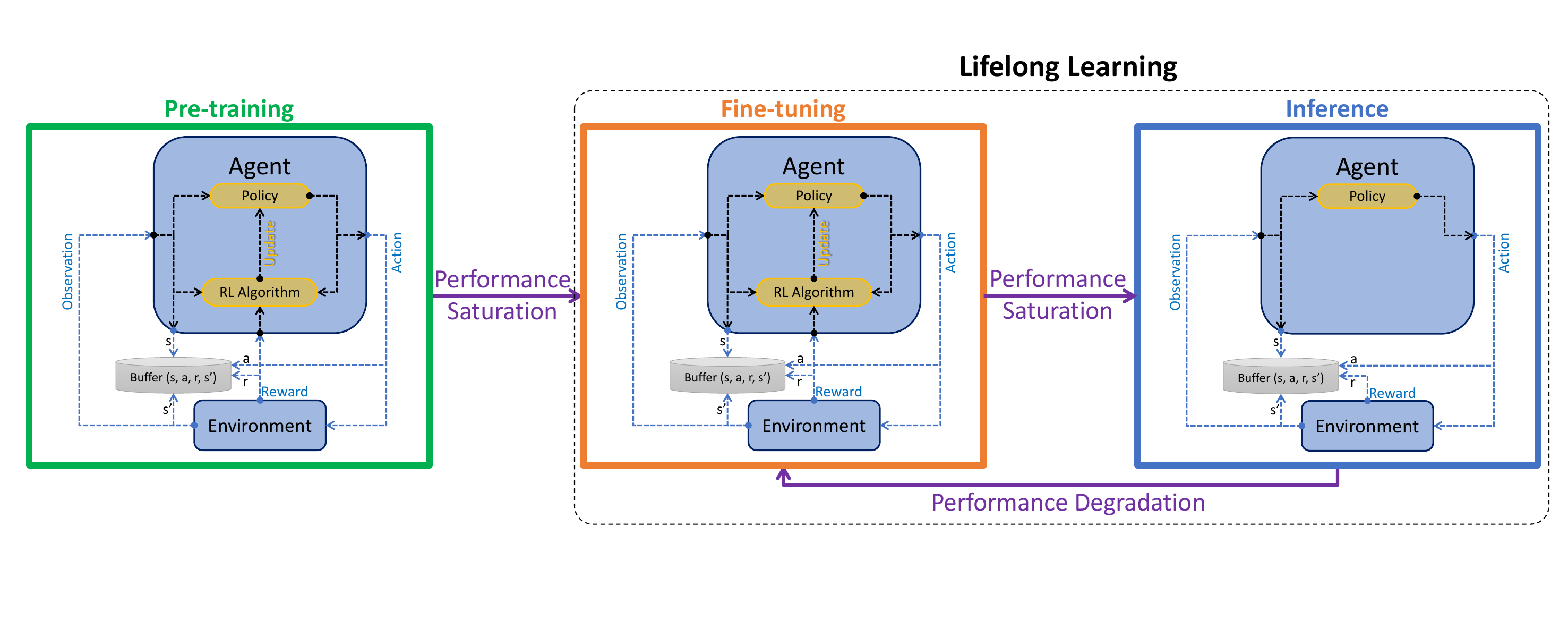}
\caption{Lifelong learning for RL agents.}
\label{fig:lifelong}
\end{figure*}

Changes in the environment can then be detected by observing a significant degradation in the average reward, i.e., an increase in the average load accumulation in the region, triggering the need for agents to perform a corrective measure to adapt to those changes. The agents are then fine-tuned using transfer learning to speed up convergence, improve data efficiency, and provide better generalization than learning from scratch \cite{taylor09}. Hence, policy network parameters are transferred from source to target task ($\theta_\mathcal{T}\leftarrow\theta_\mathcal{S}$) along with $x$ recent replay buffer interactions ($\mathcal{B}_\mathcal{T}\leftarrow \text{RecentExperiences}(\mathcal{B}_\mathcal{S}, x)$), providing fast and efficient lifelong adaptation to changes \cite{ebrahim2023lifelong}. Since effective transfer learning requires source and target tasks to share a certain level of similarity \cite{ebrahim2019will}, we introduce changes only through significant surges in workload generation rates. Using transfer learning allows the agents to learn robust policies using information $\mathcal{I}$ from both source and target environments, i.e., $\pi_\theta=\phi(\mathcal{I}_\mathcal{S}{\sim}E_\mathcal{S}, \mathcal{I}_\mathcal{T}{\sim}E_\mathcal{T}): S^\mathcal{T}\rightarrow A^\mathcal{T}$. 

\section{Performance Evaluation}
\label{sec:eval}

This paper compares learning independent decision-making policies with learning a common policy, where a common policy results from single-agent RL, CTDE MARL, or even distributed MARL training with parameter or experience sharing. Hence, we compare the performance of learning independent policies by our proposed fully distributed MARL approach against learning a common policy by a single agent as in \cite{ebrahim2023lifelong}. This comparison will highlight the advantages of training independent agents compared to a single LB agent, especially with convergence speed, performance, and scalability. To show why learning independent policies is faster than learning a common policy, Fig. \ref{fig:resources} shows a scenario where considering the remote rich resources of Fog node $F_2$ is beneficial to the agent in $AP_1$ while considering the remote limited resources of Fog node $F_1$ is bad for the agent in $AP_2$. Both agents should learn to use nearby Fog resources while carefully considering only feasible remote Fog resources. Using a common policy to learn a different strategy for each $AP$ is harder, more complex, and requires more training than learning an independent policy for each $AP$. Learning a policy for each agent independently allows every agent to find its optimal load distribution strategy faster by avoiding the complexity of learning a global policy that satisfies the needs of all $APs$.

\begin{figure}[!htbp]
\centering
\includegraphics[trim=150 30 150 30, clip, width=0.48\textwidth]{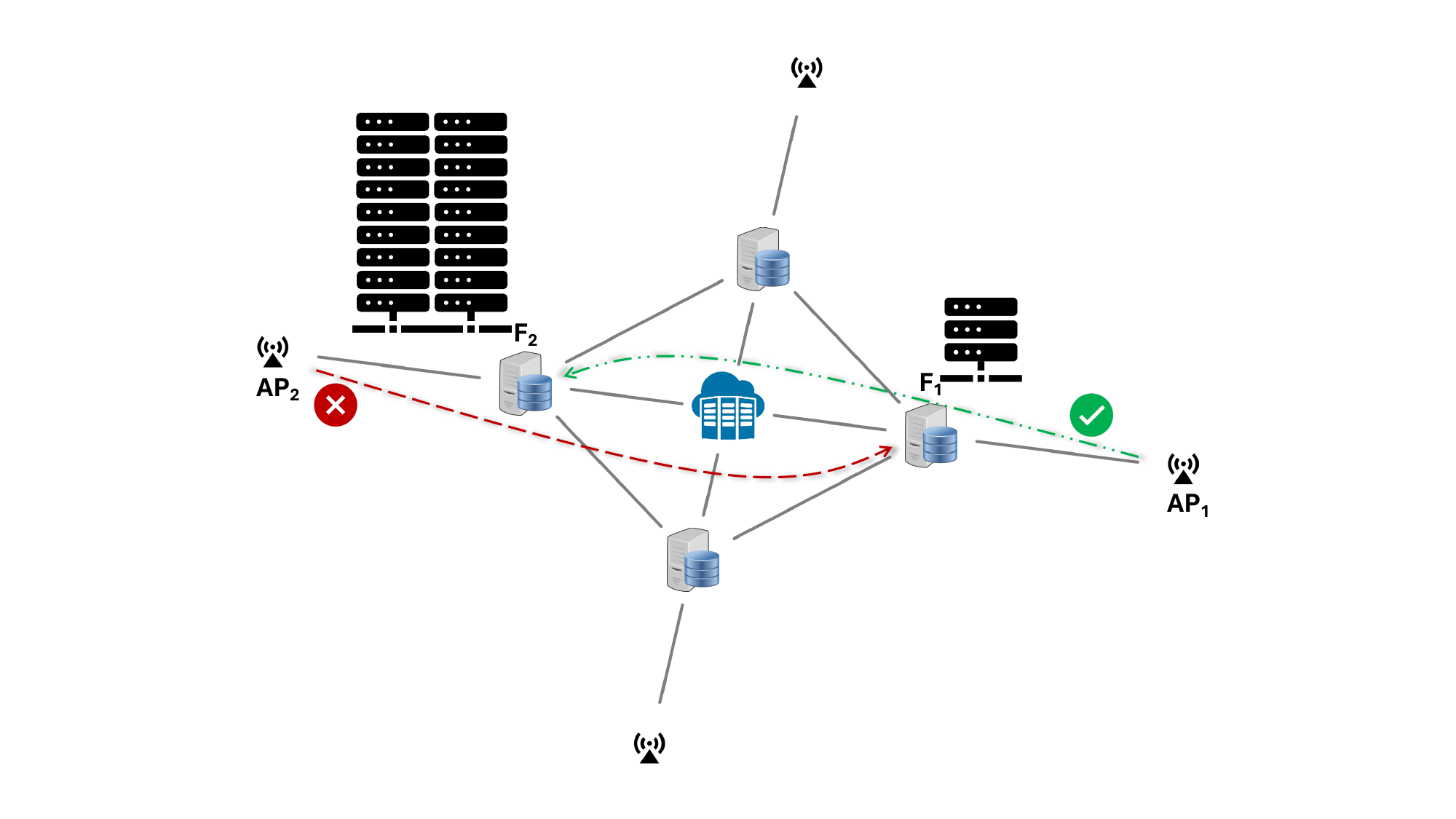}
\caption{Learning a common policy vs. independently learned policies.}
\label{fig:resources}
\end{figure}

The system parameters of the experiments in this work are based on the experiments in our previous contributions \cite{EBRAHIM2023resilience, EBRAHIM2023privacy, ebrahim2023lifelong}, ensuring consistency and relevance in our evaluation. We used TF-Agents \cite{TFAgents} to implement the DDQL algorithm with the hyper-parameters in Table \ref{tab:paramsddqn}. A fully independent agent is deployed in each AP with its policy model, Q-Network, and experience replay buffer without sharing any information between the agents. The simulations run on a DELL G5 laptop with 16GB RAM and Intel Core i7-9750H (6 physical cores, 12MB Cache, running up to 4.5 GHz). The agents are trained using NVIDIA GeForce RTX 2070 GPU with Max-Q Design (8GB RAM). To simulate realistic Fog environments we used YAFS Discrete-Event Simulator \cite{YAFS}.

\begin{table}[tb]
\small
\caption{DDQL Hyper-parameters \label{tab:paramsddqn}}
\centering
\begin{tabular}{|l|l|}
\hline
\textbf{Parameter} & \textbf{Values}\\
\hline
Discount Factor $\gamma$ & 0.99 \\
\hline
Decayed Epsilon-Greedy & The first 75\% Training steps \\
\hline
Decayed Epsilon values & Linearly from 100\% to 1\% \\
\hline
Replay Buffer Type & Uniform \\
\hline
Maximum Buffer Capacity & 1 Million trajectories of experience \\
\hline
Initial Buffer Population & 10\% of its maximum capacity \\
\hline
Buffer mini-Batch Size & 50 samples of 2 steps each \\
\hline
Network Train Period & 4 Decision steps \\
\hline
Target Update Period & 2000 Decision Steps \\
\hline
Network Layers & Fully connected [256, 128, 64] \\
\hline
Network Optimizer & Adam with $2.5e^{-4}$ learning rate \\
\hline
Network Loss Function & Huber Loss \\
\hline
\end{tabular}
\end{table}

Our distributed agents are evaluated on isolated Fog regions randomly generated with complex flat topologies, i.e., multiple interconnected Fog nodes. To realistically simulate Fog networks, each Fog region is created, using the Networkx library \cite{networkx}, from random undirected graphs resembling Internet Autonomous System networks. The nodes in these graphs are identified as the Cloud, Fog nodes, or IoT APs based on their betweenness centrality \cite{betweenness}, i.e., the Cloud is in the center of the graph while APs are at the edges (see Fig. \ref{fig:topo}). To resemble unbalanced load distribution relative to Fog resources, Fog nodes are assigned heterogeneous compute resources (uniformly between $[10^3,10^5]$ Instructions Per Second (IPS)) inversely proportional to the number of their directly connected APs. 

\begin{figure}[!htbp]
\centering
\includegraphics[trim=20 15 20 20, clip, width=0.48\textwidth]{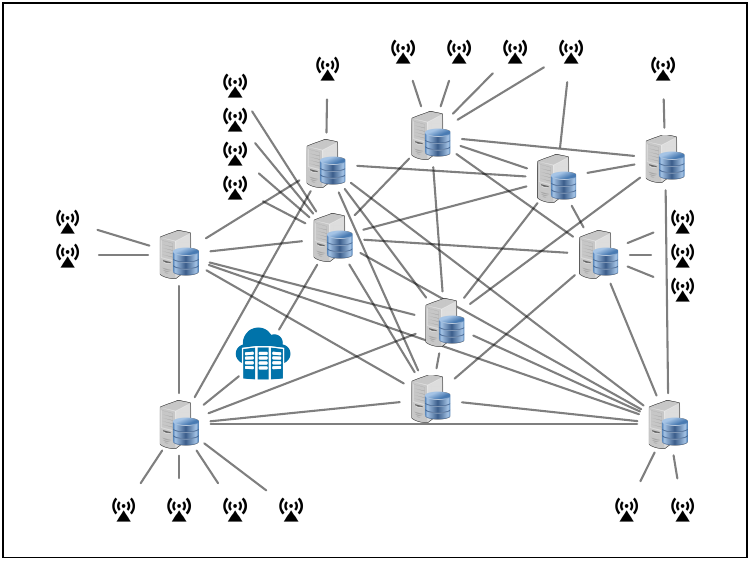}
\caption{A Fog region with unbalanced flat topology with 11 interconnected Fog nodes and 21 IoT APs.}
\label{fig:topo}
\end{figure}

The Cloud is connected to two Fog nodes with the highest betweenness centrality and has computing resources of $10^6$ IPS. To simulate undedicated public networks with a distanced Cloud and Fog nodes towards the edge of the network, the propagation delays of IoT-Fog, Fog-Fog, and Fog-Cloud links are drawn from uniform distributions in the ranges $[1,2)$, $[2,4)$, and $[10,20)$ seconds, respectively \cite{FLoadNet}. The bandwidths of those link categories are drawn from uniform distributions in the ranges $[10^2,10^3)$, $[10^3,10^4)$, and $[10^3,10^4)$ Mbps, respectively \cite{FLoadNet}. To create a system with scarce resources, workloads with heavy, moderate, and light demands are simultaneously generated with compute requirements ($10^4$, $10^3$, and $10^2$ instructions, respectively) and data sizes ($10^3$, $10^2$, and $10^1$ Bytes, respectively) relative to Fog and link resources.

These workloads are generated as a Poisson Point Process using exponential distributions to effectively model the stochastic nature of job arrivals in Fog computing environments \cite{MCDM}. We used three scales for the exponential distributions, i.e., $\beta=$ 200, 150, and 100, to evaluate the performance under low, medium, and high load intensities, respectively. Without efficient load distribution, heavy workloads with high generation rates can easily create system bottlenecks due to the unbalanced Fog environment. This makes learning optimal load distribution strategies with high generation rates much harder than learning those strategies in environments with less demand. Therefore, our agents are trained with a low rate first for 30K training steps (in episodes of 10K simulation steps) before the rate instantly increases to the next load intensity. Using transfer learning, the agents rapidly adapt to the increased generation rates as they intensify from low to medium, and then from medium to high.

In the following section, we compare the proposed fully Distributed RL (DRL) approach against a Centralized single-agent RL (CRL) solution to underscore the benefits of independent decision-making over learning a common policy. To demonstrate the effectiveness of intelligent load distribution compared to simple selection methods, we also included \textbf{Random} and Distributed Round-Robin (\textbf{DRR}) selection, as well as selecting the \textbf{Nearest} Fog node to each AP and the \textbf{Fastest} Fog service for each workload. The average system waiting delay is the performance metric that allows us to evaluate the ability to reduce workload accumulation in each Fog node. We also plot the variations in waiting times and resource utilization between all Fog nodes in the region, which helps evaluate the agents' ability to balance the load in the network and provide fairness in resource utilization, respectively. We then measure the average end-to-end execution delay for the IoT application loop to analyze the impact of network latency and service times on load distribution decisions based solely on reducing waiting delay.

\subsection{Performance Analysis}
\label{subsec:results}

The results in this section are averaged over 10 experimental trials, each with a different random seed, to illustrate the confidence intervals of performance across random variations in Fog architecture. These confidence intervals capture the variability in results due to topology changes, aiding in assessing our findings' precision, reliability, and robustness across different environments. These results represent the performance in an evaluation episode that is $10\times$ longer than training episodes, i.e., 100K simulation steps, offering a measure of generalization for lifelong inference during deployment. The workload generation rate in this evaluation episode is high to evaluate load distribution with frequently generated workloads, which is very difficult to learn \cite{ebrahim2023lifelong}. When compared against baseline methods, a logarithmic scale is used on the results to show the performance of RL-based solutions due to their significantly superior performance.

Fig. \ref{fig:avgWaiting}\subref{fig:avgWaiting1} illustrates the mean system waiting delay achieved using each load distribution method, which is the average time a workload must wait in the queue of a Fog node before being served. The results show superior performance with RL-based methods compared to simple baselines, highlighting the need for intelligent load distribution in unbalanced Fog environments. The results also show notable performance gains when transitioning from centralized to distributed RL, achieved by learning independent load distribution strategies that leverage resources in proximity differently for each distributed agent. Blindly learning separate policies by independent agents not only outperforms learning a common policy for all APs, but also eliminates the need to share network parameters, buffered interactions, or the actions of other agents. The small error bars in the figure show the robustness of RL-based solutions against variations in Fog topology across multiple experimental runs, showing the learning ability of the agents despite changes in problem complexity.

\begin{figure}[!htbp]
\centering
\subfloat[DRL vs. CRL and baselines]{\includegraphics[width=0.47\textwidth]{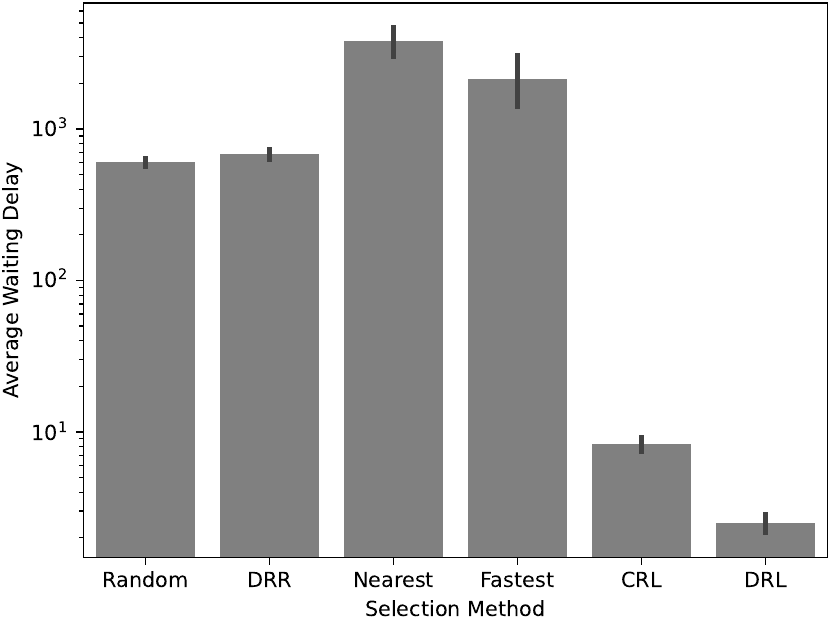}
\label{fig:avgWaiting1}}
\hfil
\subfloat[Realistic vs. unrealistic observations]{\includegraphics[width=0.47\textwidth]{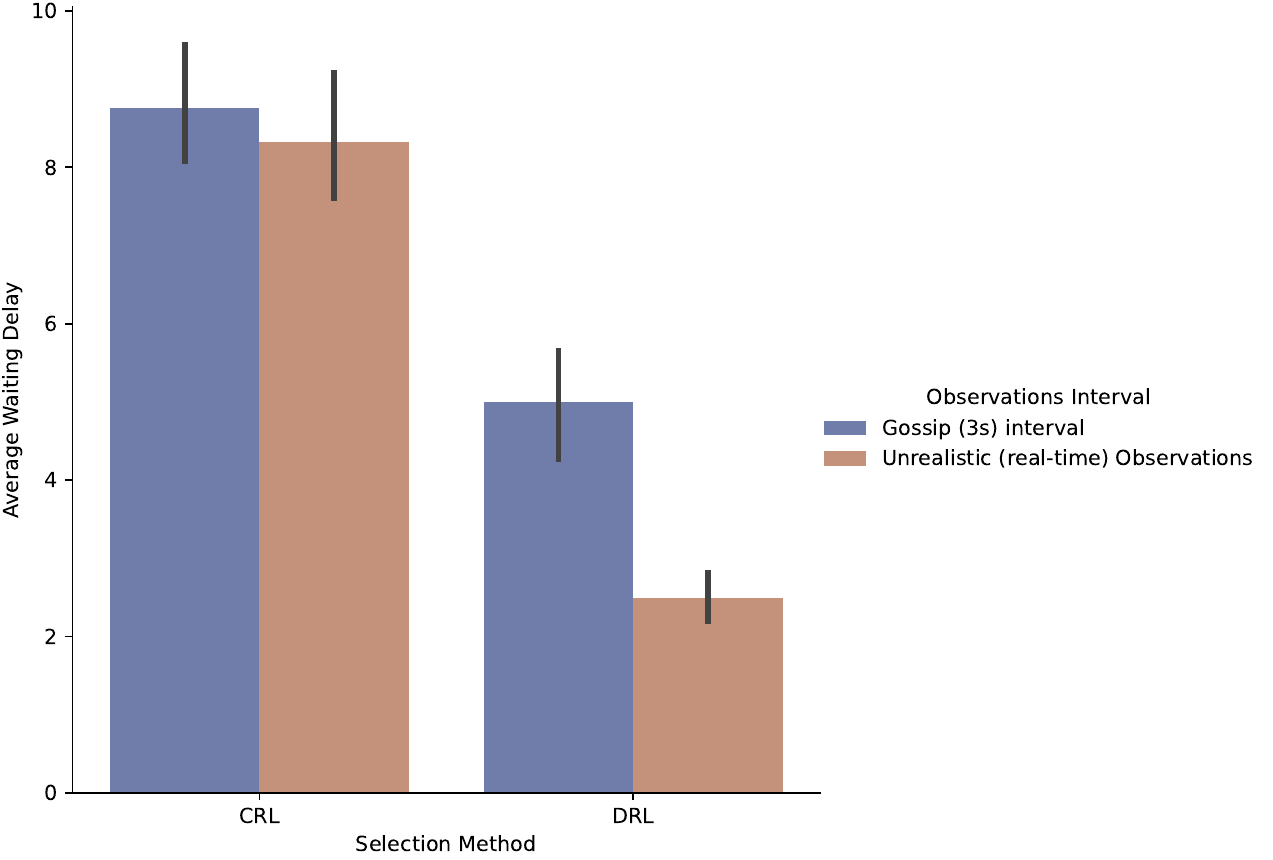}
\label{fig:avgWaiting2}}
\caption{The average system waiting delay.}
\label{fig:avgWaiting}
\end{figure}

Fig. \ref{fig:avgWaiting}\subref{fig:avgWaiting2} compares DRL with CRL only focusing on the impact of practical interval-based observations (observing Fog load every 3 seconds) compared to unrealistic real-time observations (a new observation for every action). This comparison is essential to show the trade-off between theoretical performance and realism of practical deployment constraints. This trade-off is often neglected in the literature by assuming Fog load information (from across the network/region) is readily available before every action. The figure shows that using the same observation to make multiple assignment decisions has a larger impact on DRL than CRL. The slightly inferior performance of DRL with interval-based observations, compared to the unrealistic optimal solution where observations are unrealistically provided in real-time, is due to the non-stationarity problem created by outdated information observed by independent agents. The impact on CRL with interval-based observations is smaller than DRL because the non-stationarity does not exist with CRL as the new observation depends on the behavior of a single agent.

Fig. \ref{fig:stdWaiting}\subref{fig:stdWaiting1} shows the variations in waiting delays between Fog nodes, measured using the standard deviation of waiting times across all Fog nodes. Lower values represent more equalized or balanced waiting times between Fog nodes. The error bars in the figure indicate the variability in these values across different experiments, demonstrating the level of result consistency under various Fog topologies. The results show how intelligent RL-based solutions can better balance the waiting times across Fog nodes than simple baselines, proving their ability to learn efficient load distribution strategies that minimize workload accumulation in all Fog nodes. By distributing RL decision-making, each agent can independently learn to distribute its load by leveraging nearby resources based on its location in the network. Therefore, DRL can balance the waiting times better than CRL since independently learning effective load distribution strategies, that fit the needs of each agent, is faster than learning a common policy for all APs. 

\begin{figure}[!htbp]
\centering
\subfloat[DRL vs. CRL and baselines]{\includegraphics[width=0.47\textwidth]{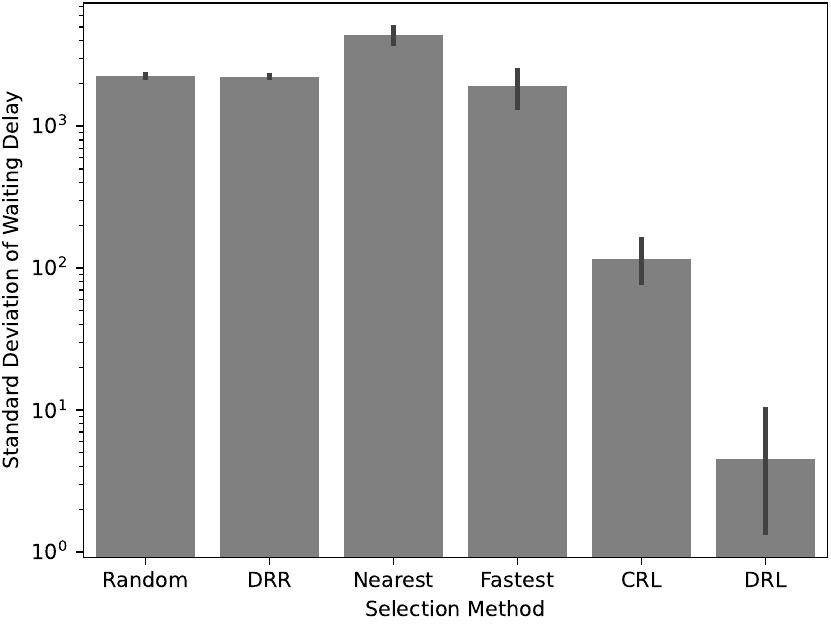}
\label{fig:stdWaiting1}}
\hfil
\subfloat[Realistic vs. unrealistic observations]{\includegraphics[width=0.47\textwidth]{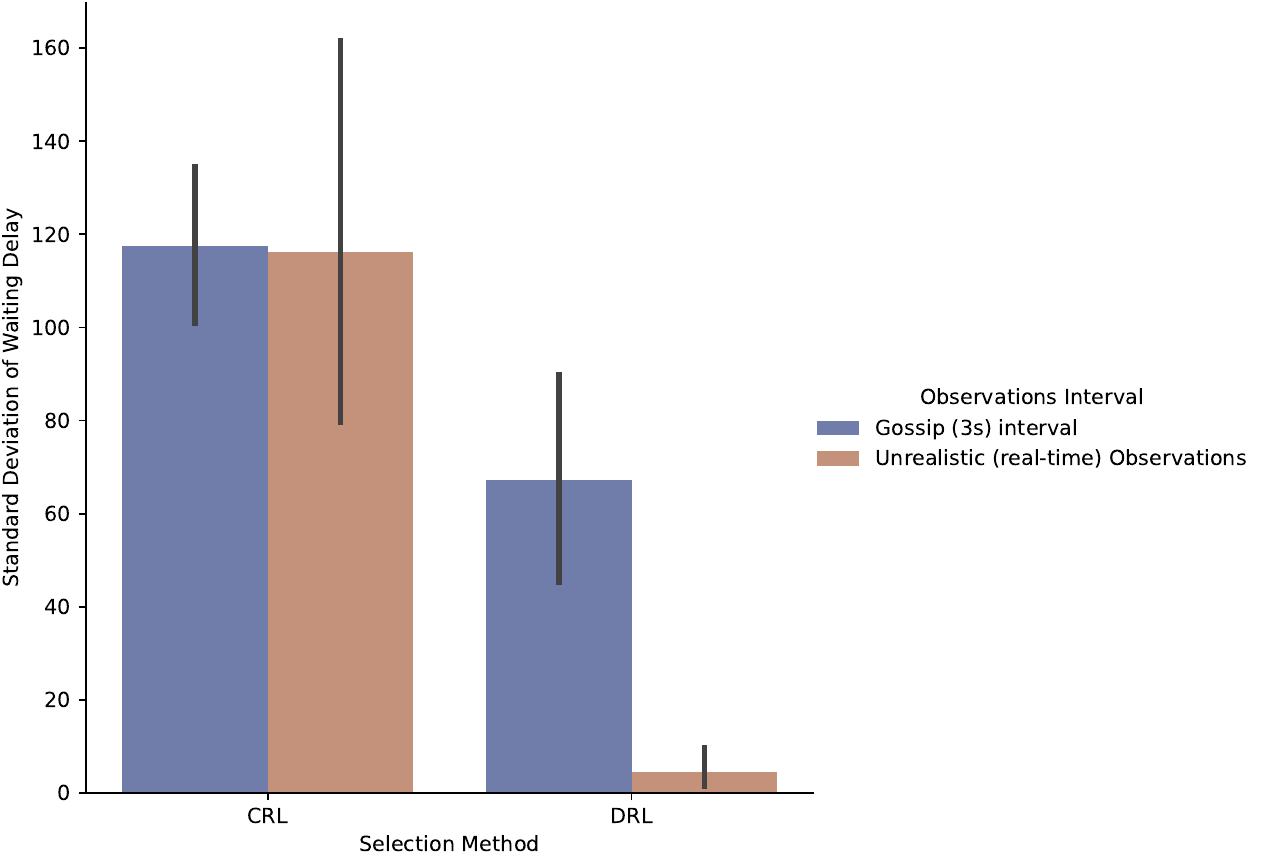}
\label{fig:stdWaiting2}}
\caption{The variations in the waiting delay between Fog nodes.}
\label{fig:stdWaiting}
\end{figure}

To focus on the impact of interval-based observations on balancing the waiting times between Fog nodes, we again limit our comparison in Fig. \ref{fig:stdWaiting}\subref{fig:stdWaiting2} to DRL and CRL only. The figure confirms that due to interval-based observations, outdated observations have a negligible impact on CRL compared to DRL. DRL is more affected by outdated observations because of the environment's non-stationarity caused by the unobserved actions of multiple independent agents. With CRL, a single agent alters the state of the environment, making it stationary from that agent's perspective, even if the same observation is used for multiple actions. However, DRL's ability to balance waiting times between Fog nodes remains significantly better than CRL's, with real-time and interval-based observations. With real-time observations, DRL demonstrates an impressive ability to achieve theoretically optimal load distribution, resulting in almost identical waiting times across all Fog nodes in the environment. 

Besides balancing waiting times between Fog nodes, RL-based solutions balance the consumption of Fog resources by reducing the variation in resource utilization between Fog nodes (see Fig. \ref{fig:stdUtilization}\subref{fig:stdUtilization1}). Balancing resource utilization provides fairness across Fog nodes and mitigates the probability of overloading or underloading individual Fog nodes in the region. This enhances system performance by encouraging all Fog nodes to equally contribute with their resources relative to their capacity and constraints. It also improves system stability by ensuring the maximum availability of spare Fog resources in the network, minimizing workload-dropping probability in case of a sudden surge in generation rates. While both RL methods are better than the baselines, DRL is significantly better in providing fair resource utilization using independently learned load distribution strategies. This proves the ability of our fully distributed agents to balance the system load without requiring the exchange of coordination information.

\begin{figure}[!htbp]
\centering
\subfloat[DRL vs. CRL and baselines]{\includegraphics[width=0.47\textwidth]{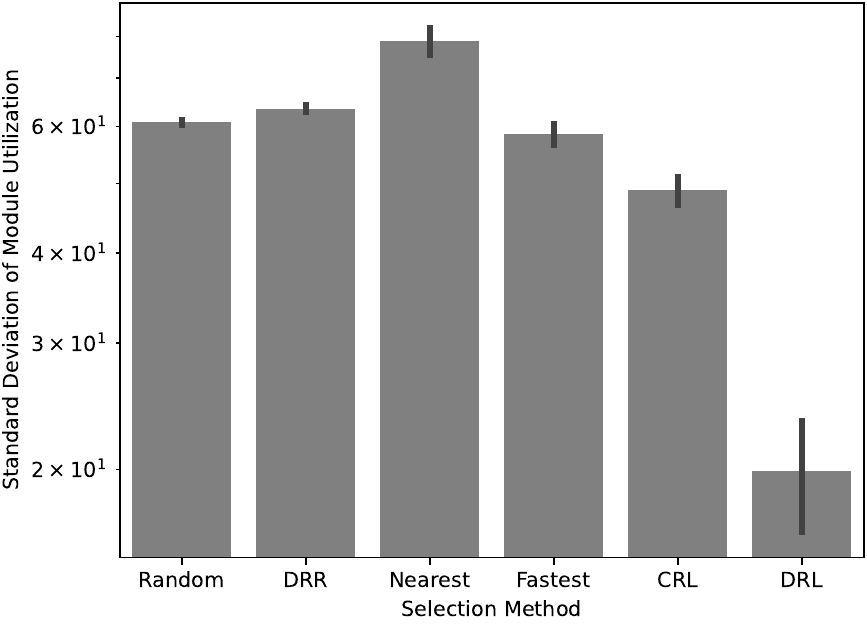}
\label{fig:stdUtilization1}}
\hfil
\subfloat[Realistic vs. unrealistic observations]{\includegraphics[width=0.47\textwidth]{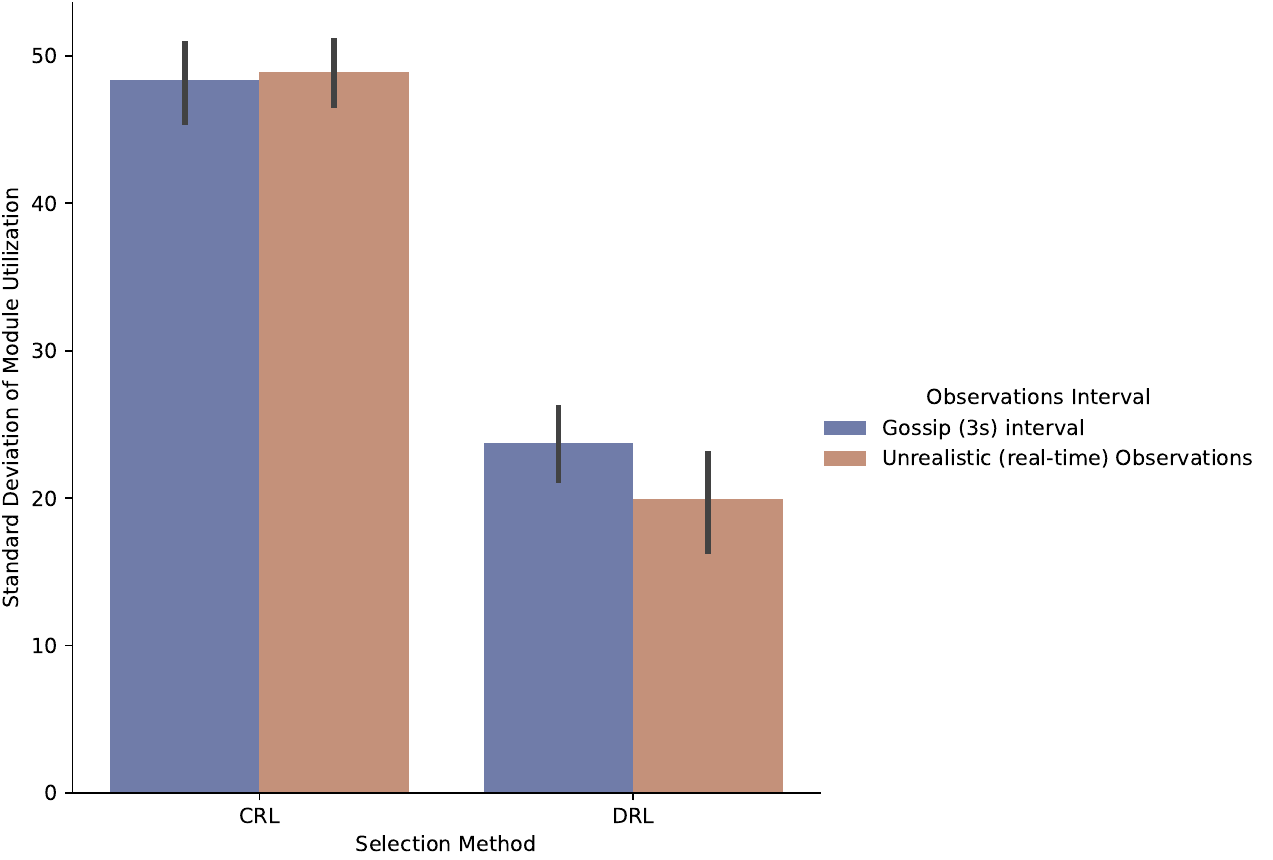}
\label{fig:stdUtilization2}}
\caption{The variations in the module utilization between Fog nodes.}
\label{fig:stdUtilization}
\end{figure}

Once again, we narrow down the comparison to DRL and CRL only to study the impact of interval-based observations on balancing resource utilization between Fog nodes (see Fig. \ref{fig:stdUtilization}\subref{fig:stdUtilization2}). The figure confirms our previous findings that the choice of observation frequency, i.e., real-time and interval-based observations, does not impact the performance of CRL due to having a single decision-making entity. Whether the CRL agent observes the environment before every action or makes an observation for all actions within 3 seconds, the variation in resource utilization (fairness) between Fog nodes is almost identical. DRL, however, is slightly impacted when observing the environment less frequently compared to unrealistically observing real-time Fog load before every action. Observing the load from Fog nodes once every 3 seconds is more practical for the actual implementation and deployment of the solution, and a slight degradation in the achievable performance can be tolerated given the constraints of real-world networks.

To analyze the impact of network latency and service times on load distribution decisions based solely on reducing waiting delay, we need to show the average end-to-end execution delay for the whole IoT application loop. This will demonstrate the direct relationship between minimizing the waiting delay of individual workloads and minimizing the execution delay of the flow of workloads that make IoT applications. Fig. \ref{fig:execution}\subref{fig:execution1} shows the ability of RL-based solutions to significantly minimize, compared to the simple baselines, the average execution time of the three application loops simultaneously running in the system. DRL performance is still slightly better than CRL due to the independent agents that learn different load distribution strategies based on their relative locations to Fog nodes, reducing workload accumulation in all Fog nodes better than using a single policy for the whole region. These independent agents find the optimal solution faster than learning a common policy by the single-agent solution. By avoiding the exchange of coordination information, and its associated overhead, these independent agents will also converge faster than implicitly learning a common policy using distributed agents that share their network parameters, experiences, or actions. 

\begin{figure}[!htbp]
\centering
\subfloat[DRL vs. CRL and baselines]{\includegraphics[width=0.47\textwidth]{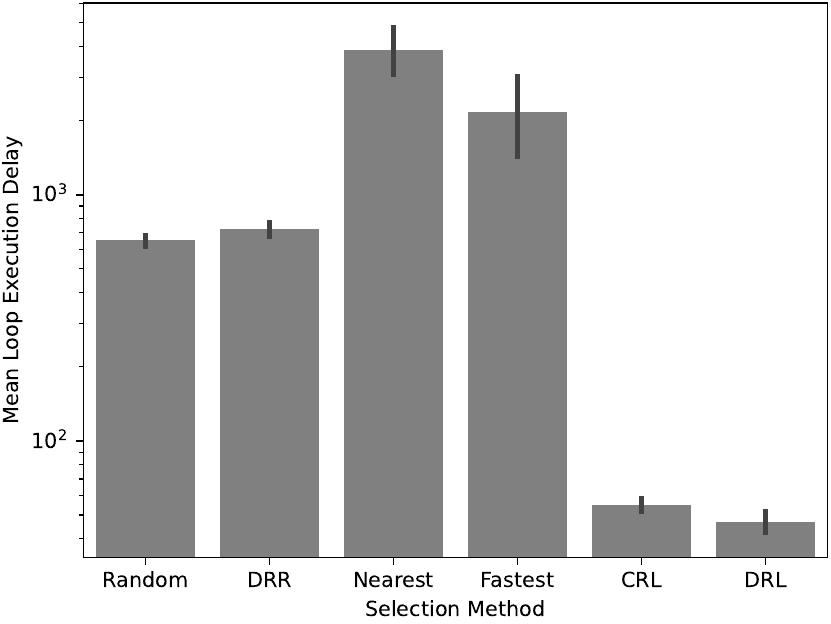}
\label{fig:execution1}}
\hfil
\subfloat[Realistic vs. unrealistic observations]{\includegraphics[width=0.47\textwidth]{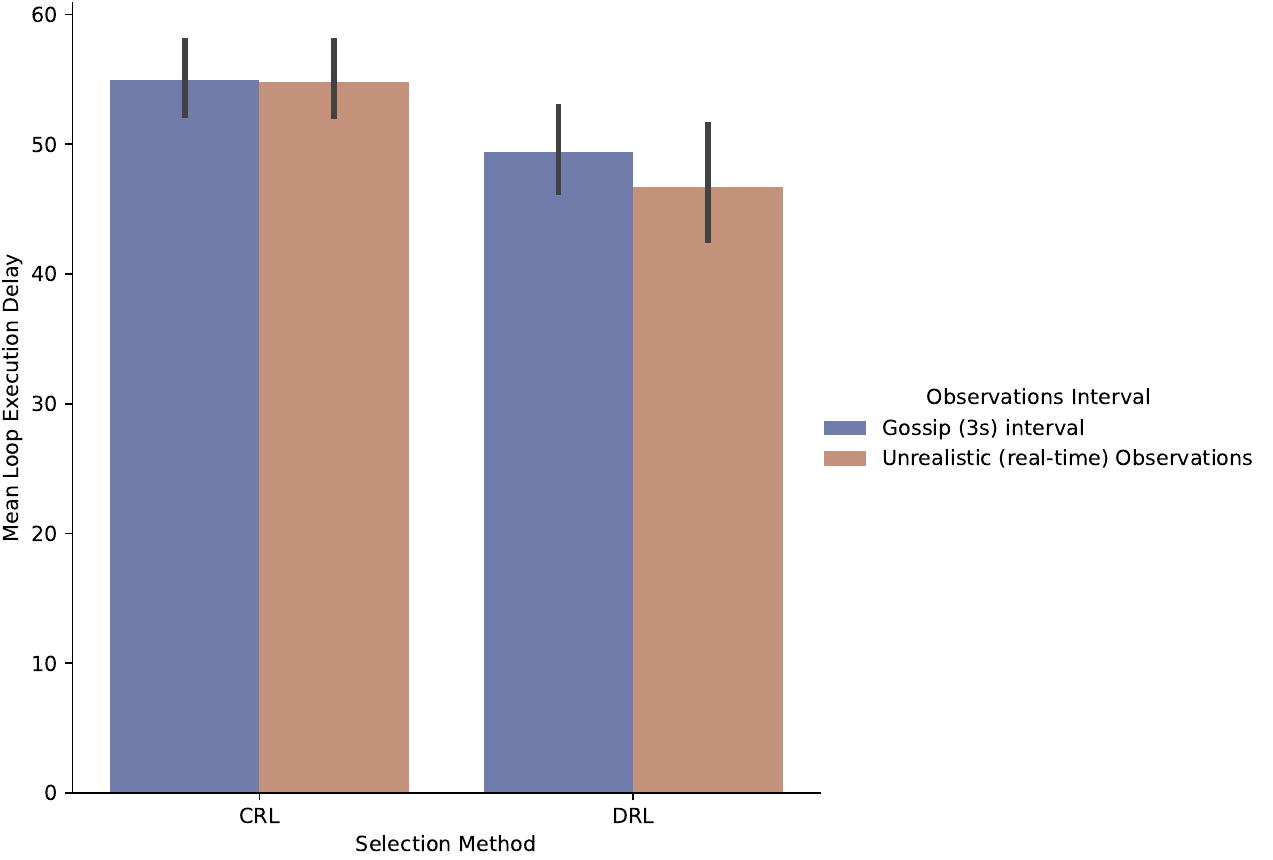}
\label{fig:execution2}}
\caption{The average total execution delay.}
\label{fig:execution}
\end{figure}

With Fig. \ref{fig:execution}\subref{fig:execution2}, we can analyze the impact of interval-based observations on the average end-to-end execution delay of IoT application loops. Once again, the results show that real-time and interval-based observations lead to almost identical average execution delays using CRL. Even with outdated observations, the series of actions, by a single agent, between two consecutive observations determine the next observed state of the environment, leading to environment stationarity. Although DRL achieves a smaller average execution delay than CRL, DRL is slightly affected by these outdated observations due to having independent actors that simultaneously change the state of the environment. Even with interval-based observations, the performance of DRL is still better than CRL. The design of smaller regions can reduce DRL's non-stationarity problem by minimizing the maximum communication delay for multi-casting Fog load information. This reduction enables the selection of smaller, yet practical, observation intervals based on the dimensionality of the region, lowering the impact of interval-based observations by independent decision makers.

In summary, the results presented in this section demonstrate the superior performance of our proposed fully distributed MARL solution (DRL) compared to the CRL approach (decision-making using a common policy) and other baselines. The results highlight the ability of independently trained agents to learn distinct load distribution strategies more quickly than when learning a common policy. These strategies collectively reduce workload accumulation in Fog nodes more effectively than a CRL policy with the same amount of training. The fully distributed agents effectively minimize average waiting delay, balance waiting times, and ensure fair resource utilization across all Fog nodes in their region, thereby improving the average end-to-end execution delay. 

Without centralized or inter-agent coordination, our fully distributed agents can jointly learn optimal load distribution strategies more quickly and with less overhead than learning a common policy. A common policy is typically learned explicitly by a single agent or CTDE solutions, or implicitly through parameter, experience, or action sharing. Also, learning independent policies that optimize a common objective enables the management of smaller collaboration regions within a global network, while still ensuring global optimality. Using small collaboration regions allows for a scalable solution that reduces problem complexity since the agents have fewer Fog nodes to manage, i.e., state and action spaces are smaller, making learning easier and converging faster. The results also emphasize the trade-off between performance and real-world applicability, as we assess the impact of interval-based observations rather than assuming idealized conditions where Fog load information is continuously available before every action.

\section{Conclusion}
\label{sec:conclusion}

This paper presents a scalable Fog load balancing solution using fully distributed RL agents that independently learn to reduce workload accumulation in Fog nodes without exchanging learning or behavior information. Blindly optimizing the common objective of reducing Fog load, with independent strategies, allows our agents to reduce the average waiting delay, balance waiting times between Fog nodes, and provide fair resource utilization, leading to optimized end-to-end execution delay for IoT applications. Deploying an independent agent for each IoT AP leads to separate load distribution strategies, i.e., policies, that converge faster than learning a common policy for all APs. With limited training, our results show that learning independent policies outperforms explicitly learning a common policy by a single agent or CTDE-based distributed agents. Even with separate training, coordination between MARL agents implicitly leads to a common policy due to sharing network parameters, buffer experiences, or load distribution decisions.

Unlike centralized and coordination-based load-balancing solutions, fully distributed agents allow global-scale implementations. The independent agents can be grouped to manage smaller sets of Fog nodes from the global network, forming Fog regions that might overlap by shared Fog resources. The ability of these agents to optimize a common objective in an isolated region without coordination guarantees their ability to optimize that objective even when some Fog resources are shared with another region. Managing fewer Fog nodes in each region reduces the state and action spaces, which makes learning easier and faster. To reduce required training further, the agents use a lifelong learning framework to adapt faster to environmental changes while mitigating the computational cost of continuous training. To do this, the agents are fine-tuned using transfer learning as generation rates significantly increase in the environment. Transferring policy network weights and replay buffer experiences between sources and target learning tasks leads to faster convergences and better generalization than learning from scratch. 

Each agent needs load information, i.e., for observations and rewards, from every Fog node in the network to make optimal load distribution decisions. Given communication latency, we can not assume the real-time availability of such information before every decision, especially with frequently generated workloads in large networks. Unfortunately, this assumption is common in the literature to report optimal performance that is not achievable in practice. Because the cost of collecting this information has never been considered before, we analyze a practical approach to observing Fog information in practical deployment. This analysis shows the trade-off between theoretical and practical performance by evaluating interval-based observations with realistic intervals from the Gossip multi-casting protocol. The findings demonstrate no impact on the single-agent solution and a slight deterioration in performance on the distributed agents. The unavoidable performance deterioration by independent agents is due to outdated observations, which make the environment non-stationary from the point of view of each agent, leading to harder learning. 

For future work, adaptive observation intervals could be explored as an alternative to fixed intervals, where the intervals are adjusted based on demand fluctuations. This approach could reduce overhead and mitigate the impact of outdated observations. Moreover, observation intervals could vary for each agent based on its location in the network relative to the Fog nodes it manages. Additionally, a real-world implementation is essential to validate the scalability and effectiveness of the proposed fully distributed load-balancing solution in practice. Deploying independent agents across geographically dispersed Fog environments with heterogeneous resources would provide a more accurate and realistic evaluation, especially in public networks with rapidly changing demands. This research direction will help bridge the gap between theoretical results and practical performance, demonstrating the feasibility of the proposed approach in real-world scenarios.

\bibliographystyle{IEEEtran} 
\bibliography{main}

\begin{IEEEbiography}[{\includegraphics[width=1in,height=1.25in,clip,keepaspectratio]{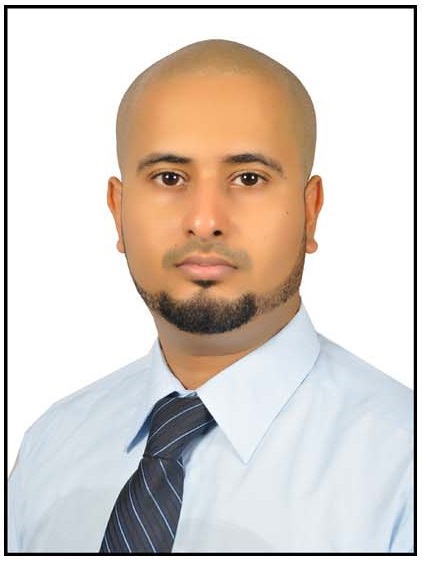}}]{Maad Ebrahim} received his Ph.D. degree in 2024 from the Department of Computer Science and Operations Research (DIRO), University of Montreal, Canada. He received his M.Sc. degree in 2019 from the Computer Science Department, Faculty of Computer and Information Technology, Jordan University of Science and Technology, Jordan. His B.Sc. degree in Computer Science and Engineering was received from the University of Aden, Yemen, in 2013. He is currently working as a machine learning intern at Ericsson GAIA AI Hub, Montreal, Canada. His research experience includes Computer Vision, Artificial Intelligence, Machine learning, Deep Learning, Data Mining, and Data Analysis. His current research interests include Fog and Edge Computing technologies, IoT, Reinforcement Learning, and Blockchains.
\end{IEEEbiography}

\begin{IEEEbiography}[{\includegraphics[width=1in,height=1.25in,clip,keepaspectratio]{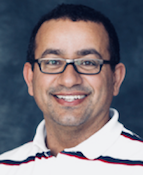}}]{Abdelhakim Senhaji Hafid}
spent several years as the Senior Research Scientist with Bell Communications Research (Bellcore), NJ, USA, working in the context of major research projects on the management of next generation networks. He was also an Assistant Professor with Western University (WU), Canada, the Research Director of Advance Communication Engineering Center (venture established by WU, Bell Canada, and Bay Networks), Canada, a Researcher with CRIM, Canada, the Visiting Scientist with GMD-Fokus, Germany, and a Visiting Professor with the University of Evry, France. He is currently a Full Professor with the University of Montreal. He is also the Founding Director of the Network Research Laboratory and Montreal Blockchain Laboratory. He is a Research Fellow with CIRRELT, Montreal, Canada. He has extensive academic and industrial research experience in the area of the management and design of next generation networks. His current research interests include the IoT, Fog/Edge Computing, blockchain, and intelligent transport systems.
\end{IEEEbiography}

\vfill

\end{document}